\documentclass[journal]{IEEEtran}
\usepackage[T1]{fontenc}
\usepackage[utf8]{inputenc}
\usepackage{amsmath,amssymb,amsfonts,amsthm}
\usepackage{algorithmic}
\usepackage{algorithm}
\usepackage{array}
\usepackage[caption=false,font=normalsize,labelfont=sf,textfont=sf]{subfig}
\usepackage{mathtools}
\usepackage{booktabs}
\usepackage{multirow}
\usepackage{makecell}
\usepackage{colortbl}
\usepackage{tabularx}
\usepackage{tikz}
\usepackage{xcolor}
\usepackage{pifont}
\usepackage{enumitem}
\usepackage{threeparttable}
\usepackage{textcomp}
\usepackage{stfloats}
\usepackage{url}
\usepackage{verbatim}
\usepackage{graphicx}
\usepackage{hyperref}
\usepackage{cite}
\definecolor{darkgreen}{HTML}{00C700}

\definecolor{algBlue}{RGB}{210,225,255}   
\definecolor{algRed}{RGB}{255,210,210}

\usetikzlibrary{arrows.meta,positioning,fit,calc}

\newtheorem{theorem}{Theorem}
\newtheorem{lemma}{Lemma}
\newtheorem{proposition}{Proposition}
\newtheorem{definition}{Definition}
\newtheorem{assumption}{Assumption}
\newtheorem{remark}{Remark}

\newcommand{\cmark}{\ding{51}} 
\newcommand{\xmark}{\ding{55}} 

\newcommand{\hlblue}[1]{\colorbox{algBlue}{\strut #1}}
\newcommand{\hlred}[1]{\colorbox{algRed}{\strut #1}}

\begin{document}

\title{Asynchronous Decentralized Federated Learning over Lossy Wireless Links via Reception- and Age-Aware Aggregation}

\author{\IEEEauthorblockN{Chanuka AS Hewa Kaluannakkage,~\IEEEmembership{Member,~IEEE and Rajkumar Buyya,~\IEEEmembership{Fellow,~IEEE}\\
\IEEEauthorblockA{\textit{Quantum Cloud and Distributed Systems (qCLOUDS) Lab} \\
\textit{School of Computing and Information Systems}\\
The University of Melbourne, Australia \\
Email: \{c.hewakaluannakkage, rbuyya\}@unimelb.edu.au}
}}}



\maketitle

\begin{abstract}
Decentralized Federated Learning(DFL) enables collaborative model training across wireless edge nodes, including IoT deployments, autonomous vehicles, UAV swarms, and satellite constellations. Operating over lossy wireless links under constraints, these systems cannot rely on retransmissions, so model parameters must be accepted as partial chunks, leading to two key failure modes, which are selection bias, where poor-quality links are systematically under-represented in gossip aggregation, and update staleness, where asynchronous nodes contribute outdated models. We prove that classical gossip aggregation introduces irreducible selection bias proportional to the link-loss rate. We propose DFL-AA (Decentralized Federated Learning with Adaptive AoI-weighted Aggregation), which corrects selection bias using Inverse Probability Weighting (IPW) with online channel estimation and mitigates staleness via Age-of-Information (AoI) decay without requiring a global clock. We prove that DFL-AA removes link-quality distortion in expectation and consistently outperforms state-of-the-art baselines across varying loss rates and heterogeneous channel conditions on fixed directed topologies.

\end{abstract}

\begin{IEEEkeywords}
Decentralized Federated Learning,
Unreliable Communication,
Partial Model Updates,
Age-of-Information
\end{IEEEkeywords}

\section{Introduction}
\label{sec:introduction}

\IEEEPARstart{D}{istributed} edge intelligence in wireless sensor networks and IoT environments increasingly relies on Decentralized Federated Learning (DFL) \cite{dflsurver2024, CFLvsDFL} rather than centralized alternatives \cite{mammen2021federated, FedMD, FedProx}, enabling collaborative model training without requiring the centralization of raw data. DFL is particularly suitable for applications such as industrial IoT systems, smart city infrastructures, autonomous vehicle networks, and UAV swarms, where data is generated in a distributed manner at the network edge. However, these environments typically operate over unreliable wireless communication channels and are subject to strict latency constraints, making retransmission-based communication mechanisms impractical. As a result, maintaining efficient and robust decentralized learning under lossy communication conditions remains a significant challenge.

(1) \textit{Unreliable Communication:} Wireless communication links widely used in networks experience high packet loss due to distance, interference, and obstacles. Unlike data centers with $< 1\%$ loss \cite{ATP}, wireless networks cannot rely on TCP (Transmission Control Protocol) retransmissions without violating the low-latency requirements of real-time decision-making. Additionally, industrial evidence also verified that retransmission overhead caused by TCP-like protocols is a significant performance bottleneck in production deployments \cite{NSDI_DCTCP_24, FedScale_Google, microsoft_tech_report}. Figure~\ref{fig:packet_reception} illustrates the expected data loss over an unreliable communication channel for a given ML model. In this regime, nodes must aggregate whatever partial model information is available at each communication round rather than waiting for complete reception. In practice, per-link channel quality varies across the network due to differences in distance and interference, motivating per-link channel estimation rather than a single global loss parameter.

\begin{figure}[!t]
\centering
\includegraphics[width=0.95\columnwidth]{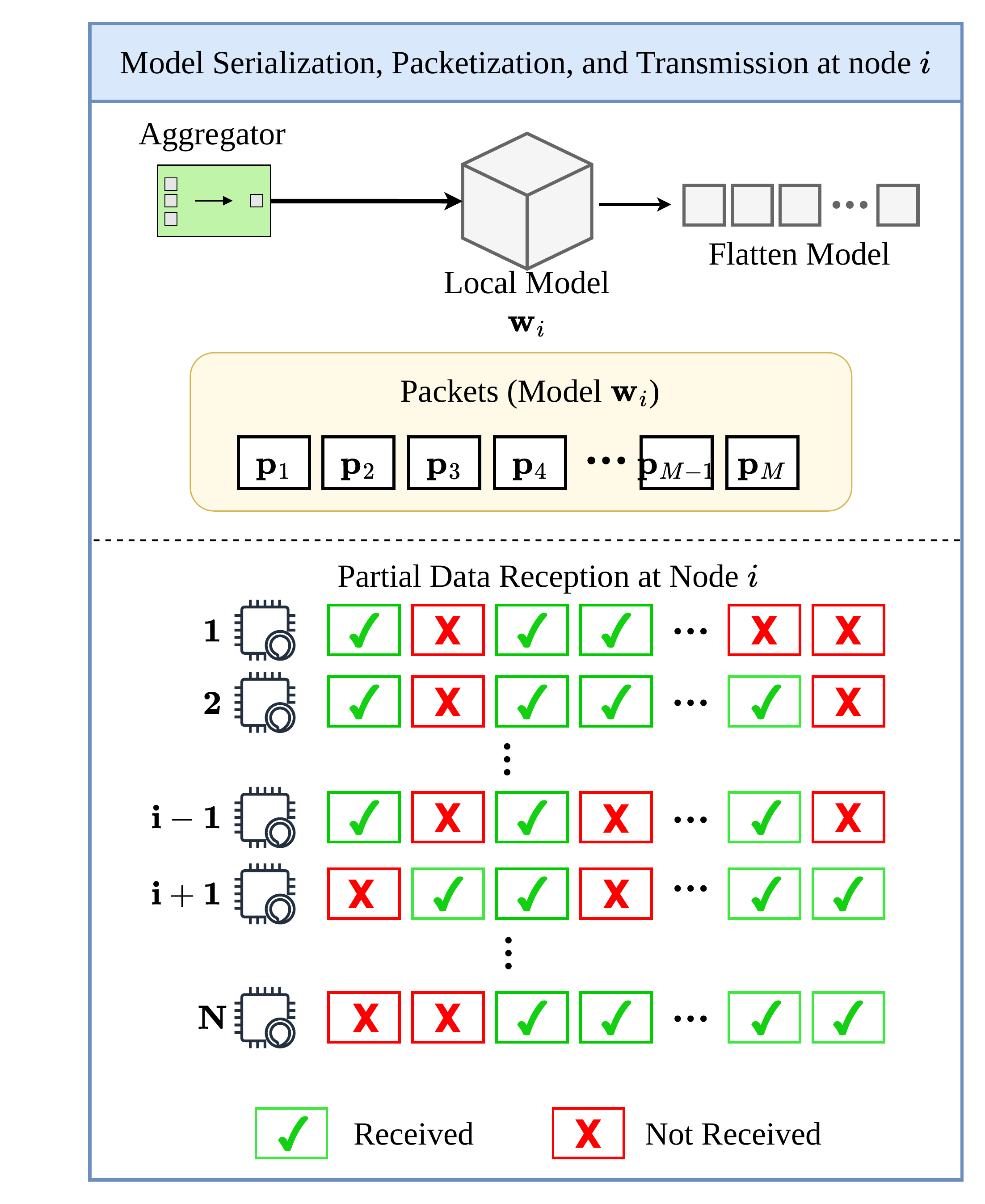}
\caption{Model serialization, packetization, and partial reception over unreliable communication in a peer-to-peer system where node $i$ receives from its neighbors.}
\label{fig:packet_reception}
\end{figure}

(2) \textit{Heterogeneity and Asynchrony:} System heterogeneity in computation speed and link quality, combined with non-IID (Independent and Identically Distributed) data distributions across nodes, causes gradient divergence. In wireless deployments, nodes train and communicate at different rates, determined by local compute capacity and link availability, resulting in genuinely asynchronous behavior with no global synchronization barrier. Figure \ref{fig:async_behavior} further illustrates the asynchronous behavior of decentralized training and communication of peer nodes due to their heterogeneous capabilities. Partial model reception under these conditions creates asymmetric information availability, in which nodes with poor links receive fewer of their neighbors' models over rounds, amplifying the effect of data heterogeneity on aggregation quality.

\begin{figure}[!t]
    \centering
    \includegraphics[width=0.95\columnwidth]{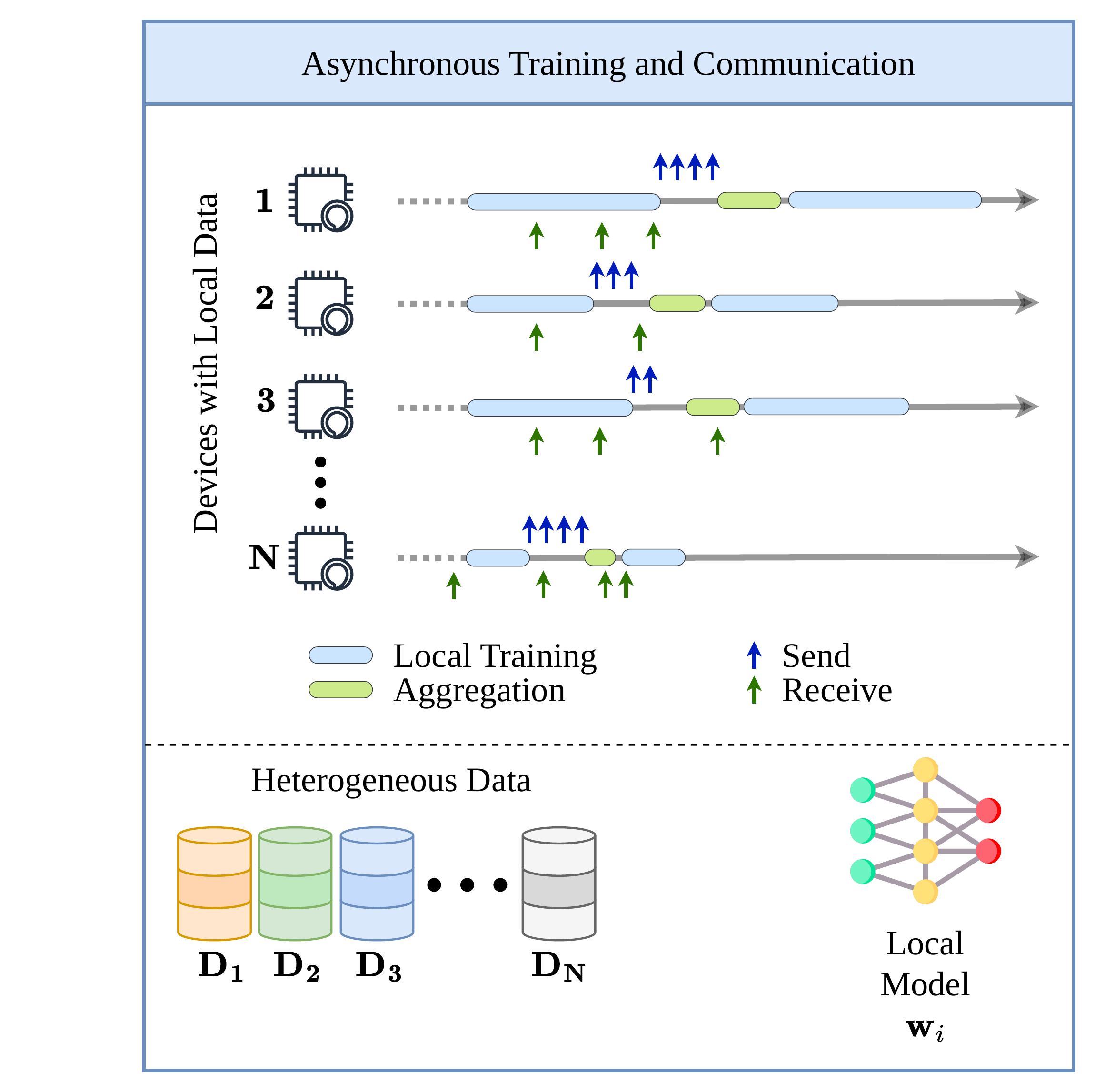}
    \caption{Asynchronous behavior of decentralized systems due to heterogeneous capabilities. Each device completes its training, receives from others, and broadcasts at a given time, but it's not synced with others.}
    \label{fig:async_behavior}
\end{figure}

\subsection{The Problem: Partial and Stale Updates} 
\label{sec:challenge}

When focusing on the low-latency \emph{fire-and-forget} communication regime, where retransmissions are infeasible, decentralized wireless networks face spatial incompleteness and temporal freshness issues.  Model parameters are serialized into fixed-size chunks. Each chunk is dropped independently, so the receiving node receives an incomplete parameter vector. Critically, these two failure modes are \emph{independent} in origin. Selection bias from spatial incompleteness arises from channel quality, while staleness arises from compute and communication heterogeneity and must be addressed jointly within a single aggregation rule to avoid compounding their effects.

\subsection{Motivation}
\label{sec:motivation}

Most existing production-level Federated Learning approaches assume reliable communication, even when they do not explicitly state so \cite{FedAvg, flower, FedScale_Google}. When it comes to DFL \cite{HADFL, adaptive_configurationfor_heterogeneous_participants_in_DFL, AsyncFL_in_decentralized_topology_on_dynamic_avg_consensus, dynamic-topology-optimization-for-efficient-and-DFL, DFL_het_edge, pFedgame_2024} and wireless DFL deployments focusing on vehicular systems \cite{chen2025mobility_aware_multi_tasking, mob-fl, drl_based_FL_for_efficient_vehicular_caching_mgmnt, a_FL_approach_to_QoS_forcasting_in_cellular_vehicular_comm}, most works still assume perfect communication. However, a lightweight solution that meets low-latency constraints is required to address the aforementioned challenge in DFL configurations, especially when operating in a wireless regime with no or minimal retransmissions. Systematic solutions in the literature for handling partial receptions without retransmissions either drop partially received updates or naively fill in the missing data from the local model \cite{DFL_with_unreliable_comm, partial_updates_veh} or zeros or maintain a complex network coding to reconstruct missing data chunks \cite{LRA_SysNC}. These approaches lead either to wasted communication resources by ignoring updates, to selection bias due to naive partial acceptance, or to extra overhead in heavy computation. Reconstruction methods for lossy channels alone cannot withstand realistic conditions in a wireless DFL setting, as described earlier. Moreover, the integration of model freshness effects into partial model reception in configurations remains limited, even though it should be properly managed given the inherent asynchrony. This gap is particularly acute in IoT and mobile network deployments, where per-link channel quality is determined by physical distance and interference, yet existing gossip aggregation protocols are limited in their ability to estimate and correct per-link reception rates online.

\subsection{Our Approach}
\label{sec:our_approach}

We observe that selection bias is a statistical sampling problem where the inbox is a biased sample of the neighborhood because links with lower reception rates appear less frequently. The classical remedy is the Horvitz-Thompson estimator~\cite{IPW}, which reweights each observed sample by the inverse of its inclusion probability. DFL-AA instantiates this principle in the gossip aggregation step with local-fill reconstruction for missing chunks: each model update(reconstructed) from neighbor $j$ is weighted by $1/\hat{q}_{ij}$, where $\hat{q}_{ij}$ is an online EWMA estimate of the observed link reception rate, up-weighting under-represented neighbors to restore a statistically unbiased sample of the neighborhood. This is combined with an AoI (Age of Information) \cite{AOI}-based exponential decay $\exp(-\mathrm{AoI}_{ij}/\tau)$, computed from generation timestamps in received messages without a global clock, to discount temporally stale contributions. The combined weight addresses both failure modes independently and multiplicatively, where a high weight requires both good channel quality and temporal freshness.

\subsection{Contributions}
\label{sec:contribution}

\begin{enumerate}[leftmargin=*, label=\textbf{C\arabic*}, noitemsep]

\item \textbf{Problem positioning.} We provide the first systematic study of the joint effect of selection bias and update staleness in asynchronous DFL over fixed directed graphs in wireless networks, revealing drastic performance degradation of state-of-the-art methods.

\item \textbf{Failure mode analysis.} We prove that uniform gossip under local-fill reconstruction distorts each neighbor's contribution by $(1-q_{ij})$~(Proposition~\ref{prop:lf_bias}), and that push-sum weight variables drain geometrically under chunk-level loss~(Proposition~\ref{prop:drain}), which is a fundamental study on directed graphs.

\item \textbf{DFL-AA algorithm.} We introduce DFL-AA (\textbf{D}ecentralized \textbf{F}ederated \textbf{L}earning with \textbf{A}daptive \textbf{A}oI-weighted Aggregation), an IPW-corrected, AoI-aware gossip aggregation rule for asynchronous fixed directed graphs, with online EWMA channel estimation requiring no coordination, no global clock, and no additional communication overhead.

\item \textbf{Unbiasedness guarantee.} We prove that DFL-AA removes the $(1-q_{ij})$ coefficient distortion in expectation, weighting each neighbor purely by AoI freshness~(Theorem~\ref{thm:unbiased}).

\item \textbf{Empirical validation.} We evaluate on EMNIST and CIFAR-10 across $20$- and $80$-node fixed directed topologies using discrete-event priority queue simulation, demonstrating consistent improvement over all baselines across loss levels $10\%$--$50\%$, with additional validation under heterogeneous per-link channel conditions and model deep model architectures. (Visit \url{https://github.com/DFedN/DFedNexus/tree/main} for reproducibility.) \\

\end{enumerate}

The rest of the paper is structured as follows. Section II reviews related work and discusses the gaps identified with positioning our work. Section III introduces the system model, including the network model, decentralized learning, and missing value reconstruction. Section IV presents the DFL-AA solution with all formulations. Section V demonstrates the validation of design choices. Section VI presents the performance evaluation, and Section VII concludes the paper and outlines future directions.

\section{Related Work}
\label{sec: related_work}

We review existing work across four dimensions relevant to DFL-AA and identify the gap that motivates our approach.

\begin{table}[t]
\caption{Comparison of DFL-AA with related methods.}
\label{tab:comparison}
\centering
\scriptsize
\setlength{\tabcolsep}{3pt}
\renewcommand{\arraystretch}{1.2}
\begin{threeparttable}
\begin{tabular}{lccccc}
\toprule
\textbf{Method} &
\textbf{Partial reception.} &
\textbf{Staleness} &
\textbf{Topology} &
\textbf{Decentral.} &
\textbf{Async} \\
\midrule
FedAvg~\cite{FedAvg}                        & Full only \xmark    & \xmark         & Star & \xmark & \xmark \\

FedBuff~\cite{fedBUFF}                      & Full only \xmark    & Buffer $\sim$  & Star & \xmark & \cmark \\
D-PSGD~\cite{lian_can_2017}                 & Full only \xmark    & \xmark         & Undir.      & \cmark & \xmark \\
SGP$^\dagger$~\cite{SGP}                              & Full only \xmark    & \xmark         & Dir.        & \cmark & \xmark \\
AD-PSGD~\cite{AD_PSGD}                      & Full only \xmark    & Bounded $\sim$ & Undir.      & \cmark & \cmark \\
SWIFT~\cite{SWIFT}                          & Full only \xmark    & Wait-free $\sim$  & Undir.     & \cmark & \cmark \\
Kwon et al.~\cite{LRA_SysNC}                & \cmark              & \xmark         & Star & \xmark & \xmark \\
AMA~\cite{partial_updates_veh}              & \cmark              & Round $\sim$         & Dynamic    & \cmark  & \cmark \\
Soft-DSGD~\cite{DFL_with_unreliable_comm}   & \cmark              & \xmark         & Undir.     & \cmark  & \xmark \\
\midrule
\textbf{DFL-AA (ours)}                      & \textbf{\cmark}     & \textbf{AoI \cmark}     & \textbf{Dir.}  & \textbf{\cmark}  & \textbf{\cmark }\\
\bottomrule
\end{tabular}
\begin{tablenotes}

\item[$\dagger$] Originally a distributed optimization algorithm; included as the canonical directed-graph gossip baseline.

\item \cmark~= fully addressed. \cmark~= fully addressed. $\sim$ = async-capable but handles staleness only via an assumption (bounded $\tau$) or a coarse mechanism (round counter, buffer), not via continuous decay weighting. \xmark~= not addressed. Undir. = undirected topology. Dir. = directed topology.
\end{tablenotes}
\end{threeparttable}
\end{table}

\subsection{Decentralized Federated Learning}
\label{sec: dfl_back}

To overcome the bottlenecks and single point of failure (SPoF) of client-server FL, decentralized federated learning (DFL) has emerged as a scalable and robust alternative \cite{dflsurver2024, CFLvsDFL}. Existing DFL research has explored heterogeneity-aware asynchronous learning that employed a coordinator that predicts the runtime configuration for the next round based on past data and probability distributions \cite{HADFL}, adaptive topology and communication optimization \cite{adaptive_configurationfor_heterogeneous_participants_in_DFL, dynamic-topology-optimization-for-efficient-and-DFL}, asynchronous model synchronization to mitigate stragglers \cite{AsyncFL_in_decentralized_topology_on_dynamic_avg_consensus}, and adaptive neighbor weighting strategies \cite{pFedgame_2024}. Semi-decentralized frameworks replace the single aggregator with cooperating edge servers that reach consensus via gossip \cite{low_latency_FL_mobile}. CoCo \cite{DFL_het_edge}, for instance, adapts both topology and compression using a consensus distance but relies on a central coordinator. While these approaches improve efficiency and scalability, they generally assume reliable communication and do not explicitly address partial model reception and update staleness in lossy wireless environments.

\subsection{Asynchronous and Staleness-Aware FL}
\label{sec:async}

Staleness naturally arises when nodes train and communicate at different rates. FedAsync and FedBuff address this in centralized settings by weighting or buffering delayed updates, but staleness is typically defined in round- or iteration-based terms (i.e., how many server/global steps have elapsed since an update was computed) rather than by wall-clock time \cite{fedAsync,fedBUFF}. AD-PSGD~\cite{AD_PSGD} allows fully decentralized asynchronous operation under a bounded-delay assumption. SWIFT~\cite{SWIFT} removes the bounded-delay requirement, achieving $O(1/\!\sqrt{T})$ convergence guarantees. FedASMU and BLADE~\cite{FedASMU, BLADE} follow similar principles, using dynamic, staleness-aware updates. However, all of these methods assume lossless delivery. They are designed for reliable networks and lack a mechanism to handle incomplete model receptions. However, in lossy wireless communication, model updates may be received only partially, resulting in incomplete information used during aggregation and, consequently, degrading learning performance.

\subsection{Partial Reception and Communication-Efficient FL}
\label{sec: partial}

A smaller body of work explicitly addresses the reception of partial models. Soft-DSGD~\cite{DFL_with_unreliable_comm} adopts a UDP-based fire-and-forget model and fills missing parameters with local values. AMA~\cite{partial_updates_veh} uses a similar local-fill strategy with link-quality thresholding. Kwon et al.~\cite{LRA_SysNC} reconstruct missing parameters via an SVD-based approximation combined with systematic network coding, but report 172\% packet overhead and operate in a centralized server setting that is not applicable to peer-to-peer topologies~\cite{LRA_SysNC}. Crucially, none of these methods corrects the \emph{selection bias} introduced by partial reception. Furthermore, none of these methods handle staleness as they assume synchronous or near-synchronous operation, which does not hold in genuinely asynchronous wireless deployments.

Communication-efficient approaches such as knowledge distillation~\cite{profecommunicationefficientdecentralizedfederated, 
prototype-based-DFL-for-heterogeneous-timevarying-IOT}, weight quantization with joint bandwidth allocation to minimize on-device energy under latency constraints \cite{energy_eff_FL_het_mobile_device}, adaptive sparse 
training~\cite{AEDFL} and RL-based clustering~\cite{large_scale_decentralized_async_edge_learning_with_device_het} reduce bandwidth consumption but assume reliable delivery and do not address the partial-reception regime.

\subsection{Wireless and IoT Deployments}
\label{sec: wireless_iot}

The partial reception problem is particularly acute in wireless deployments, where per-link channel quality varies with distance, interference, and node heterogeneity~\cite{survey_resource_constraint_IOT_FL}. Wireless FL research has explored vehicular networks~\cite{chen2025mobility_aware_multi_tasking, mob-fl, drl_based_FL_for_efficient_vehicular_caching_mgmnt}, UAV systems~\cite{a_FL_approach_to_QoS_forcasting_in_cellular_vehicular_comm}, and mobility-aware optimization~\cite{mobility_aware_dfl_with_joint_optimization_IMPORTANT}, but these works focus on scheduling and resource allocation under reliable communication rather than correcting aggregation bias from packet loss.

In IoT and sensor network settings, early work on in-network aggregation~\cite{TAG} established the importance of handling partial data collection in wireless sensor networks to reduce communication overhead. However, this work was designed to reduce aggregation overhead but does not explicitly consider partial model aggregation under partial reception. FL for IoT under communication constraints~\cite{park_wirelss, abad_2020} typically assumes reliable delivery or relies on centralized gateway aggregation, neither of which applies to our setting.

\subsection{Research Gap}
\label{sec:gap}

Table~\ref{tab:comparison} summarizes how DFL-AA relates to existing methods. No prior work jointly addresses partial reception and staleness in asynchronous wireless DFL: approaches that handle partial updates typically ignore staleness, while asynchronous methods assume lossless communication. In addition, round-based staleness metrics are poorly suited, where heterogeneous computation and communication delays create irregular round gaps that fail to reflect true model freshness; Age-of-Information (AoI) provides a more appropriate measure in such environments. In contrast, DFL-AA is the only method that jointly corrects selection bias due to lossy links and captures temporal staleness via AoI, operating on directed graphs without requiring synchronized rounds or lossless delivery assumptions, which are properties particularly valuable for IoT and mobile network deployments in a wireless setting where per-link channel quality is heterogeneous, and retransmission is infeasible.

\section{System Model}
\label{sec: system_model}

\subsection{Network Model}
\label{sec:network_model}

We consider $n$ nodes connected by a \emph{fixed directed graph} $\mathcal{G} = (\mathcal{V}, \mathcal{E})$ that remains fixed over time. A directed edge $(j,i) \in \mathcal{E}$ indicates that node $j$ can transmit to node $i$. Let $\mathcal{N}(i) = \{j : (j,i) \in \mathcal{E}\}$ denote the in-neighborhood of node $i$. In contrast, $\mathcal{N}_{out}(i)$ denote the out-neighborhood. We assume $\mathcal{G}$ is strongly connected.

\begin{assumption}[Strong connectivity]
\label{ass:conn}
The directed graph $\mathcal{G}$ is strongly connected, meaning that for every pair of nodes $(i,j)$, there exists a directed path from $i$ to $j$.
\end{assumption}

\begin{remark}
We restrict to fixed directed topologies, capturing scenarios where connectivity is preplanned rather than opportunistic. Extending to fully dynamic topologies is left for future work.

\end{remark}

\subsection{Learning Objective}
\label{sec:learning_obj}

Each node $i$ holds private dataset $\mathcal{D}_i$ drawn from local distribution $\mathcal{P}_i$. Each node maintains a local model $\mathbf{w}_i \in \mathbb{R}^d$ cooperating to minimize a global objective with $d$ model parameters:
\begin{equation}
    \min_{\mathbf{w} \in \mathbb{R}^d} F(\mathbf{w}) = \frac{1}{n} \sum_{i=1}^{n} f_i(\mathbf{w})
  \quad f_i(\mathbf{w}) = \mathbb{E}_{\xi \sim 
  \mathcal{D}_i}[\ell(\mathbf{w};\xi)].
\label{eq:fl_objective}
\end{equation}

where $f_i(\mathbf{w}) = \mathbb{E}_{\xi \sim \mathcal{D}_i}[\ell(\mathbf{w}; \xi)]$ is the local loss for node $i$ over its private data distribution $\mathcal{D}_i$, and $\ell(\cdot)$ is the loss function. We consider the non-IID setting $\mathcal{P}_i \neq \mathcal{P}_j$, modeled by Dirichlet-$\alpha$ label partitioning.

\subsection{Communication Model}
\label{sec:comm_model}

We adopt a fire-and-forget UDP-like communication model. Each node serializes its model $\mathbf{w}_i \in \mathbb{R}^d$ into $K$ fixed-size chunks, each containing $\lfloor d/K \rfloor$ parameters. Chunk $k$ on link $(j \to i)$ is received independently with probability $q_{ij} \in (0,1]$.

\begin{definition}[Bernoulli chunk loss]
Let $b_{ij}^{(k)} \sim \mathrm{Bernoulli}(q_{ij})$ independently for each chunk $k \in \{1,\ldots,K\}$. The parameter-level reception mask $\mathbf{m}_{ij} \in \{0,1\}^d$ is defined as:
\begin{equation}
  \mathbf{m}_{ij}[p] = b_{ij}^{(k(p))},
\end{equation}
where $k(p) \in \{1,\ldots,K\}$ denotes the chunk index to which parameter $p$ belongs.
\label{def:bernoulli}
\end{definition}

\begin{definition}[Completeness]
The observed completeness of a transmission from $j$ to $i$ is $c_{ij} = \frac{1}{K}\sum_{k=1}^{K} b_{ij}^{(k)}$, with $\mathbb{E}[c_{ij}] = q_{ij}$.
\label{def:completeness}
\end{definition}

\begin{assumption}[Homogeneous chunk loss]
\label{ass:homo_loss}
The Bernoulli loss probability $q_{ij} \in (0,1]$ is identical across all $K$ chunks on a given link $(j \to i)$: for all $k \in \{1,\ldots,K\}$. Every chunk on the same link experiences the same loss probability.
\end{assumption}

\noindent Local-fill reconstruction completes the received vector via element-wise multiplication:
\begin{equation}
  \hat{\mathbf{w}}_j = \mathbf{m}_{ij} \odot \tilde{\mathbf{w}}_j 
  + (\mathbf{1} - \mathbf{m}_{ij}) \odot \mathbf{w}_i,
  \label{eq:localfill}
\end{equation}
where $\odot$ denotes elementwise multiplication, $\mathbf{m}_{ij} \in \{0,1\}^d$ is the reception mask from Definition~\ref{def:bernoulli}, and $\tilde{\mathbf{w}}_j \in \mathbb{R}^d$ is the partially received model with arbitrary values at unobserved positions. Received parameters ($\mathbf{m}_{ij}[p]=1$) retain their transmitted values; missing parameters ($\mathbf{m}_{ij}[p]=0$) are substituted with the receiver's own values $\mathbf{w}_i[p]$.

\subsection{Asynchronous Operation}
\label{sec:async_model}

Nodes operate without a global synchronization barrier. Each node $i$ maintains:
\begin{itemize}[noitemsep]
\item Local model $\mathbf{w}_i \in \mathbb{R}^d$.
\item Inbox $\mathcal{B}_i$ storing the most recent $(\tilde{\mathbf{w}}_j, c_{ij}, t_{\mathrm{gen}}^j)$ from each in-neighbor, where $t_{\mathrm{gen}}^j$ is generation timestamp. Older entries are overwritten on the new reception.
\item EWMA estimate $\hat{q}_{ij}$ for each in-neighbor, updated on each reception.
\end{itemize}

\noindent\textbf{Age of Information.}
Each transmission carries a generation timestamp $t_{\mathrm{gen}}^j$ in the simulation's virtual time. Node $i$ computes the AoI reference as the most recent timestamp in its inbox, thereby no global clock is required:
\begin{equation}
  t_{\mathrm{ref}} = \max_{j \in \mathcal{B}_i} 
  t_{\mathrm{gen}}^j, \quad
  \mathrm{AoI}_{ij} = \max(0,\; 
  t_{\mathrm{ref}} - t_{\mathrm{gen}}^j).
  \label{eq:aoi}
\end{equation}

\section{The DFL-AA Algorithm}
\label{sec:algorithm}

\subsection{Design Rationale}

DFL-AA addresses two independent failure modes with two independent components:

\textbf{Spatial: IPW corrects selection bias.}
Under partial reception, low-quality links (low $q_{ij}$) contribute fewer messages, leading to systematic under-representation. This creates selection bias, where the received models are not a uniform sample of neighbors. DFL-AA uses the Horvitz–Thompson \cite{IPW} correction by weighting each received update with $1/\hat{q}_{ij}$, so weaker links are up-weighted to compensate for their lower visibility.

\textbf{Temporal: AoI decay discounts staleness.}
DFL-AA addresses the staleness issue by applying an exponential decay $\exp(-\mathrm{AoI}_{ij}/\tau)$, reducing the influence of outdated updates relative to fresh ones. This operates independently of IPW, where the first corrects spatial sampling bias, while the second accounts for temporal staleness.

Table~\ref{tab:notation} summarizes the notation used throughout Algorithm~\ref{alg:dflaa}, which presents DFL-AA at a single node $i$, and Figure~\ref{fig:architecture} illustrates the overall process of the proposed method. 

\begin{table}[h]
\caption{Notation used in Algorithm~\ref{alg:dflaa}}
\label{tab:notation}
\centering
\footnotesize
\setlength{\tabcolsep}{4pt}
\renewcommand{\arraystretch}{1.35}
\begin{tabular}{p{2.0cm} p{5.8cm}}
\toprule
\textbf{Symbol} & \textbf{Description} \\
\midrule
$\mathbf{w}_i^{(t)}$         & Local model at round $t$ \\
$\mathbf{w}_i^{(t+\frac{1}{2})}$ & Model after local training, before aggregation \\
$\mathcal{B}_i$              & Inbox: most recent transmission per in-neighbor \\
$\mathcal{B}_i^+$            & Inbox entries with $c_{ij}^{(t)} \geq c_{\min}$ \\
$\mathbf{w}_j^{(t)}$             & True complete model at node $j$ \\
$\tilde{\mathbf{w}}_j^{(t)}$ & Partially received model from neighbor $j$ \\
$\hat{\mathbf{w}}_j^{(t)}$   & Local-fill reconstructed model from $j$ \\
$\hat{q}_{ij}$               & EWMA estimate of reception rate on link $j \to i$ \\
$a_{ij}^{(t)}$               & Combined IPW $\times$ AoI weight for neighbor $j$ \\
\bottomrule
\end{tabular}
\end{table}

\subsection{EWMA Channel Estimation}
\label{sec:ewma_estimation}

Since the true reception rate $q_{ij}$ is unknown a priori and may vary with distance and interference, DFL-AA estimates it online via EWMA:
\begin{equation}
  \hat{q}_{ij} \leftarrow (1-\beta)\,\hat{q}_{ij} + \beta\,c_{ij},
  \label{eq:ewma}
\end{equation}
where $c_{ij}$ is the observed completeness of the most recent transmission from
$j$ and $\beta \in (0,1)$ is the EWMA forgetting factor. The estimate is
initialized to the first observed $c_{ij}$ and floored at $q_{\mathrm{floor}} > 0$ to prevent $1/\hat{q}_{ij}$ from becoming numerically degenerate:
$\hat{q}_{ij} \leftarrow \max(\hat{q}_{ij}, q_{\mathrm{floor}})$.

\subsection{Aggregation Rule}
\label{sec:aggr_rule}

The aggregation step at node $i$ is a normalized weighted average:
\begin{equation}
  \mathbf{w}_i^{\mathrm{new}} = \frac{\mathbf{w}_i + \sum_{j \in \mathcal{B}_i^+} a_{ij}\,\hat{\mathbf{w}}_j}
    {1 + \sum_{j \in \mathcal{B}_i^+} a_{ij}},
  \label{eq:aggregate}
\end{equation}
where $\mathcal{B}_i^+ = \{j \in \mathcal{B}_i : c_{ij} \geq c_{\min}\}$ excludes transmissions below the minimum completeness threshold $c_{\min}$ and $\hat{\mathbf{w}}_j$ is local-fill reconstructed model parameters according to Equation \eqref{eq:localfill} and $a_{ij}^{(t)} = (1/{\hat{q}_{ij}}) \cdot \exp (-{\mathrm{AoI}_{ij}^{(t)}}/{\tau})$ with $\tau$ as hyperparameter for decay.

\begin{remark}[Necessity of local-fill reconstruction]
Local-fill reconstruction~\eqref{eq:localfill} is a prerequisite for the IPW correction rather than an independent design choice. IPW reweights each neighbor's contribution by $1/\hat{q}_{ij}$ to correct for the underrepresentation of poor-quality links across rounds. However, this correction requires a complete, fixed-length parameter vector for aggregation. Without reconstruction, partially received models have variable support across coordinates, making a normalized weighted average undefined. Local-fill provides this fixed-length input at negligible cost: it requires no additional communication, no coordination with the sender, and only a single element-wise operation over $d$ parameters (Eq.~\eqref{eq:localfill}), which is dominated by the local SGD computation by several orders of magnitude and comparably low computation than other existing methods \cite{LRA_SysNC}, which uses SVD-based reconstruction from systematic network coding. Local-fill is therefore both necessary for IPW to operate and provably safe under the DFL-AA aggregation rule.
\end{remark}

\begin{figure*}[!t]
    \centering
    \includegraphics[width=\textwidth]{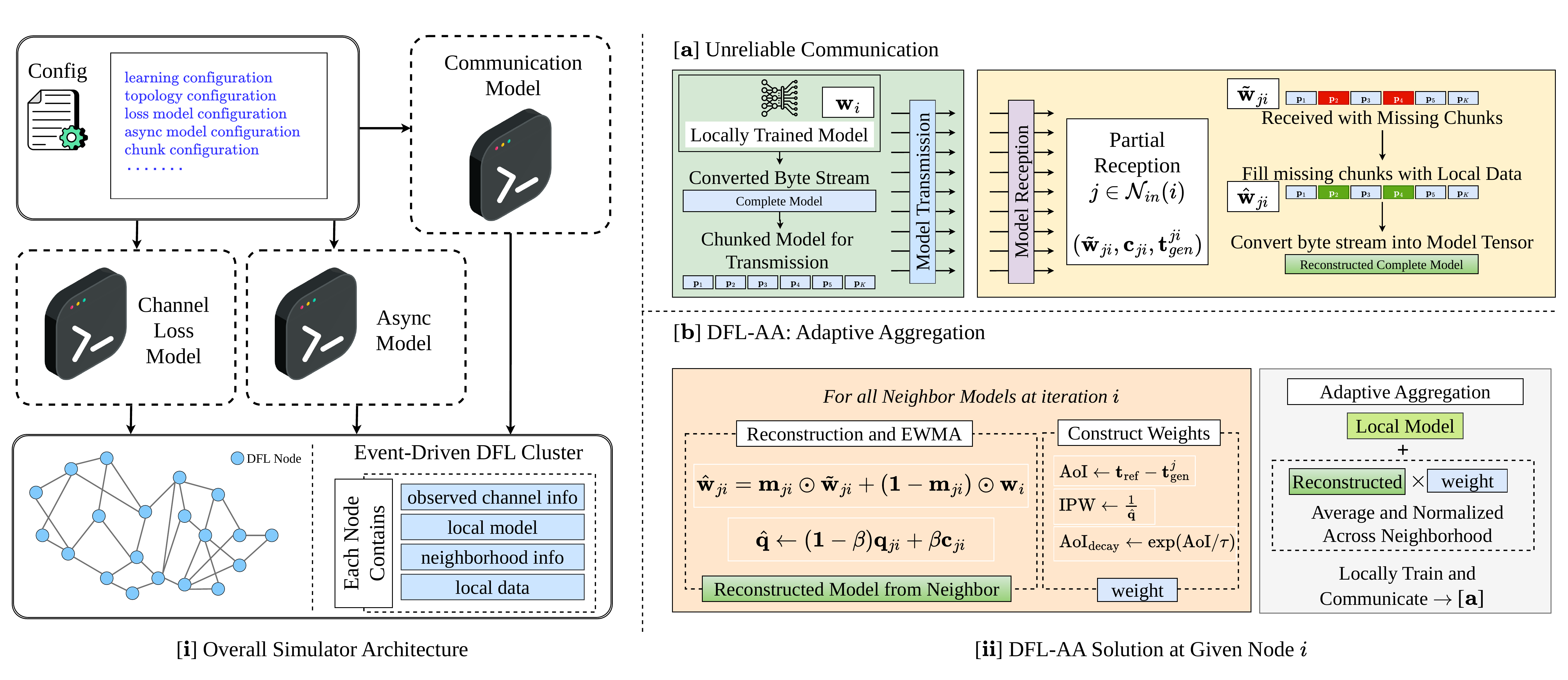}
    \caption{Overview of DFL-AA approach with simulation architecture, unreliable communication, and core DFL-AA architecture on a certain node.}
    \label{fig:architecture}
\end{figure*}

\begin{algorithm}[t]
\caption{\hlblue{DFL-AA} at node $i$}
\label{alg:dflaa}
\renewcommand{\algorithmicrequire}{\textbf{Input:}}
\renewcommand{\algorithmicensure}{\textbf{Output:}}
\begin{algorithmic}[1]
\REQUIRE Local data $\mathcal{D}_i$, initial model $\mathbf{w}_i^{(0)}$,
learning rate $\eta$, local steps $E$, AoI decay $\tau$, EWMA factor
$\beta$, minimum completeness $c_{\min}$, reception floor
$q_{\mathrm{floor}}$, time $T$
\ENSURE Final model $\mathbf{w}_i^{(T)}$
\STATE Initialize inbox $\mathcal{B}_i \leftarrow \emptyset$, and
$\hat{q}_{ij} \leftarrow 1.0$ for all $j \in \mathcal{N}(i)$
\FOR{$t = 0$ to $T-1$}
    \STATE \hlred{//~Phase 1: Local Training}
    \STATE $\mathbf{w}_i^{(t,0)} \leftarrow \mathbf{w}_i^{(t)}$
    \FOR{$e = 1$ to $E$}
        \STATE Sample mini-batch $\xi \sim \mathcal{D}_i$
        \STATE $\mathbf{w}_i^{(t,e)} \leftarrow \mathbf{w}_i^{(t,e-1)} -
        \eta \nabla f_i(\mathbf{w}_i^{(t,e-1)}; \xi)$
    \ENDFOR
    \STATE $\mathbf{w}_i^{(t+\frac{1}{2})} \leftarrow \mathbf{w}_i^{(t,E)}$
    \STATE $t_{\mathrm{gen}} \leftarrow t$
    \STATE \hlred{//~Phase 2: Broadcast}
    \STATE $\{(\mathbf{w}_i^{(t+\frac{1}{2}),(k)}, t_{\mathrm{gen}})\}_{k=1}^{K}
    \leftarrow \mathrm{Serialize}(\mathbf{w}_i^{(t+\frac{1}{2})})$
    \STATE Broadcast to all $j \in \mathcal{N}_{\mathrm{out}}(i)$
    (fire-and-forget)
    \STATE \hlred{//~Phase 3: Adaptive Aggregation}
    \STATE Receive partial models
    $\{(\tilde{\mathbf{w}}_j^{(t+\frac{1}{2})}, c_{ij}^{(t)},
    t_{\mathrm{gen}}^j)\}$ in $\mathcal{B}_i$
    \STATE $t_{\mathrm{ref}} \leftarrow
    \max_{j \in \mathcal{B}_i} t_{\mathrm{gen}}^j$
    \FORALL{$(\tilde{\mathbf{w}}_j^{(t+\frac{1}{2})}, c_{ij}^{(t)},
    t_{\mathrm{gen}}^j) \in \mathcal{B}_i$}
        \IF{$c_{ij}^{(t)} \ge c_{\min}$}
            \STATE $\hat{\mathbf{w}}_j^{(t+\frac{1}{2})} \leftarrow$
            Local-Fill$(\tilde{\mathbf{w}}_j^{(t+\frac{1}{2})})$ : Eq.~\eqref{eq:localfill}
            \STATE \hlblue{$\hat{q}_{ij} \leftarrow (1-\beta)\hat{q}_{ij} +
            \beta\, c_{ij}^{(t)}$}
            \STATE $\hat{q}_{ij} \leftarrow
            \max(\hat{q}_{ij},\, q_{\mathrm{floor}})$
            \STATE \hlblue{$\mathrm{AoI}_{ij}^{(t)} \leftarrow
            \max(0,\; t_{\mathrm{ref}} - t_{\mathrm{gen}}^j)$}
            \STATE \hlblue{$a_{ij}^{(t)} \leftarrow \dfrac{1}{\hat{q}_{ij}} \cdot
            \exp\!\left(-\dfrac{\mathrm{AoI}_{ij}^{(t)}}{\tau}\right)$}
        \ENDIF
    \ENDFOR
    \IF{$\sum_{j \in \mathcal{B}_i^+} a_{ij}^{(t)} > 0$}
        \STATE \hlblue{$\mathbf{w}_i^{(t+1)} \leftarrow
        \dfrac{\mathbf{w}_i^{(t+\frac{1}{2})} +
        \sum_{j \in \mathcal{B}_i^+} a_{ij}^{(t)}\,
        \hat{\mathbf{w}}_j^{(t+\frac{1}{2})}}
        {1 + \sum_{j \in \mathcal{B}_i^+} a_{ij}^{(t)}}$}
    \ENDIF
\ENDFOR
\STATE \textbf{output} $\mathbf{w}_i^{(T)}$
\end{algorithmic}
\end{algorithm}


\section{Design Validation}
\label{sec:design}

We prove that the two components of DFL-AA's weight formula~\eqref{eq:aggregate} are each necessary, and that natural alternatives are provably incorrect. Throughout, $q_{ij} \in (0,1]$ denotes the true per-chunk reception probability on link $(j \to i)$ from Definition~\ref{def:bernoulli}, and 
expectations are taken over chunk loss randomness.

\subsection{Failure Mode 1: Selection Bias}
\label{sec:failure_mode1}

\begin{lemma}[Expected local-fill reconstruction]
\label{lem:lf_expect}
Under Bernoulli chunk loss and local-fill reconstruction~\eqref{eq:localfill}, the expected reconstructed model at each parameter $p$ is:
\begin{equation}
  \mathbb{E}\!\left[\hat{\mathbf{w}}_j[p]\right]
  = q_{ij}\,{\mathbf{w}}_j[p] + (1-q_{ij})\,\mathbf{w}_i[p].
  \label{eq:lf_expect}
\end{equation}
\end{lemma}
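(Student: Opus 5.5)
The argument is a one-line linearity-of-expectation computation applied coordinatewise to the local-fill rule~\eqref{eq:localfill}. Fix a parameter index $p$ and write the reconstruction at that coordinate as
\begin{equation*}
  \hat{\mathbf{w}}_j[p] = \mathbf{m}_{ij}[p]\,\tilde{\mathbf{w}}_j[p] + \bigl(1-\mathbf{m}_{ij}[p]\bigr)\,\mathbf{w}_i[p].
\end{equation*}
The only random quantity, with expectation taken over chunk loss as stipulated at the start of this section, is the mask entry $\mathbf{m}_{ij}[p]$. The transmitted model $\mathbf{w}_j$ and the receiver's own model $\mathbf{w}_i$ are treated as fixed, i.e.\ we condition on the state of both nodes at transmission and aggregation time.

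The first step is to identify the law of $\mathbf{m}_{ij}[p]$. By Definition~\ref{def:bernoulli}, $\mathbf{m}_{ij}[p]=b_{ij}^{(k(p))}$, and Assumption~\ref{ass:homo_loss} gives $b_{ij}^{(k(p))}\sim\mathrm{Bernoulli}(q_{ij})$ regardless of which chunk $k(p)$ contains $p$. Hence $\mathbb{E}[\mathbf{m}_{ij}[p]]=q_{ij}$ and $\mathbb{E}[1-\mathbf{m}_{ij}[p]]=1-q_{ij}$.

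The second step handles the term $\mathbf{m}_{ij}[p]\,\tilde{\mathbf{w}}_j[p]$. The values of $\tilde{\mathbf{w}}_j$ at unobserved positions are arbitrary, but they are always multiplied by $\mathbf{m}_{ij}[p]=0$. On the event $\mathbf{m}_{ij}[p]=1$, the received value equals the transmitted one, $\tilde{\mathbf{w}}_j[p]=\mathbf{w}_j[p]$. So $\mathbf{m}_{ij}[p]\,\tilde{\mathbf{w}}_j[p]=\mathbf{m}_{ij}[p]\,\mathbf{w}_j[p]$ identically, and the arbitrary filler values drop out. Taking expectations with $\mathbf{w}_j[p]$ and $\mathbf{w}_i[p]$ deterministic gives $q_{ij}\mathbf{w}_j[p]+(1-q_{ij})\mathbf{w}_i[p]$, which is~\eqref{eq:lf_expect}.

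The main point needing care, though it is not hard, is the justification for treating $\mathbf{w}_i[p]$ and $\mathbf{w}_j[p]$ as constants with respect to the loss randomness. I would state explicitly that the expectation is conditional on the models and that the chunk-loss variables are independent of the training and model state. Under asynchrony, $\mathbf{w}_i$ could in principle depend on earlier losses, but conditioning on the current models resolves this. Independence across chunks is not needed here, since only the marginal of a single $b_{ij}^{(k)}$ enters.
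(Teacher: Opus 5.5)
Your proposal is correct and matches the paper's proof: the paper also splits on whether $\mathbf{m}_{ij}[p]$ equals $1$ or $0$, uses that received values equal the transmitted ones, treats $\mathbf{w}_i[p]$ and $\mathbf{w}_j[p]$ as deterministic given the current model states, and applies $\Pr(\mathbf{m}_{ij}[p]=1)=q_{ij}$ via Assumption~\ref{ass:homo_loss}. Your linearity-of-expectation form on the masked expression is the same computation written without an explicit case split.
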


\begin{IEEEproof}
From Definition~\ref{def:bernoulli}, parameter $p$ belongs to chunk $k(p)$, and $\mathbf{m}_{ij}[p] = b_{ij}^{(k(p))}$ where $b_{ij}^{(k(p))} \sim \mathrm{Bernoulli}(q_{ij})$.
From the local-fill equation~\eqref{eq:localfill}, $\hat{\mathbf{w}}_j[p]$ takes one of two values depending on whether chunk $k(p)$ was received:

\begin{equation}
\hat{\mathbf{w}}_{j}[p] =
\begin{cases}
\mathbf{w}_{j}[p], & \text{if } \mathbf{m}_{ij}[p] = 1, \\[1.0mm]
\mathbf{w}_{i}[p], & \text{if } \mathbf{m}_{ij}[p] = 0.
\end{cases}
\label{eq:local_fill_elementwise}
\end{equation}

Note that when $\mathbf{m}_{ij}[p] = 1$, the received value $\tilde{\mathbf{w}}_j[p]$ equals the true value $\mathbf{w}_{j}[p]$ since the chunk arrived intact. Taking expectation by conditioning on the two cases:

\begin{align}
\mathbb{E}\!\left[\hat{\mathbf{w}}_{j}[p]\right]
&=
\Pr\!\left(\mathbf{m}_{ij}[p]=1\right)\mathbf{w}_{j}[p]
+
\Pr\!\left(\mathbf{m}_{ij}[p]=0\right)\mathbf{w}_{i}[p]
\nonumber\\
&=
q_{ij}\,\mathbf{w}_{j}[p]
+
(1-q_{ij})\,\mathbf{w}_{i}[p].
\label{eq:local_fill_expectation}
\end{align}

where $\mathbf{w}_{j}[p]$ and $\mathbf{w}_{i}[p]$ are deterministic given the current model states and are therefore constant with respect to the Bernoulli reception process $\mathbf{m}_{ij}[p]$. We used $\mathbb{E}[\mathbf{m}_{ij}[p]] =
\mathbb{E}[b_{ij}^{(k(p))}] = q_{ij}$ (\textit{Assumption~\ref{ass:homo_loss}}).
\end{IEEEproof}

\begin{proposition}[Selection bias of uniform gossip]
\label{prop:lf_bias}
Under Bernoulli$(q_{ij})$ chunk loss, the expected 
uniform gossip aggregate:
\begin{equation}
  \mathbf{w}_i^{\mathrm{unif}} = 
  \frac{\mathbf{w}_i + \sum_{j \in \mathcal{N}(i)} 
  \hat{\mathbf{w}}_j}{1 + |\mathcal{N}(i)|}
\end{equation}
satisfies:
\begin{equation}
  \mathbb{E}\!\left[\mathbf{w}_i^{\mathrm{unif}}\right]
  = \mathbf{w}_i^{\mathrm{ideal}}
  - \underbrace{\frac{\sum_{j}(1-q_{ij})
    ({\mathbf{w}}_j - \mathbf{w}_i)}
    {1 + |\mathcal{N}(i)|}}_{\text{selection bias}},
  \label{eq:unif_bias}
\end{equation}
where $\mathbf{w}_i^{\mathrm{ideal}} = (\mathbf{w}_i + \sum_j {\mathbf{w}}_j)/(1+|\mathcal{N}(i)|)$ is the full-reception aggregate. The bias is irreducible under non-IID data and grows with both the loss rate $(1-q_{ij})$ and the model disagreement $\|\mathbf{w}_i - \mathbf{w}_j\|$.
\end{proposition}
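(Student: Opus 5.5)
The plan is to apply linearity of expectation to the uniform aggregate and substitute Lemma~\ref{lem:lf_expect} for each neighbor's reconstructed model. Condition on the current model states $\mathbf{w}_i$ and $\{\mathbf{w}_j\}_{j\in\mathcal{N}(i)}$; these are deterministic with respect to the chunk-loss randomness, exactly as in the proof of Lemma~\ref{lem:lf_expect}. Hence the only random quantities in $\mathbf{w}_i^{\mathrm{unif}}$ are the masks $\mathbf{m}_{ij}$. The denominator $1+|\mathcal{N}(i)|$ is a constant, so
\begin{equation*}
\mathbb{E}\!\left[\mathbf{w}_i^{\mathrm{unif}}\right] = \frac{\mathbf{w}_i + \sum_{j\in\mathcal{N}(i)} \mathbb{E}[\hat{\mathbf{w}}_j]}{1+|\mathcal{N}(i)|}.
\end{equation*}
This requires no independence across links; linearity alone suffices.

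Next, I apply Lemma~\ref{lem:lf_expect} coordinatewise, which gives $\mathbb{E}[\hat{\mathbf{w}}_j] = q_{ij}\mathbf{w}_j + (1-q_{ij})\mathbf{w}_i$. I then rewrite this as $\mathbf{w}_j - (1-q_{ij})(\mathbf{w}_j - \mathbf{w}_i)$. Summing over $j$ and splitting the numerator gives $\mathbf{w}_i + \sum_j \mathbf{w}_j$, which over the common denominator is exactly $\mathbf{w}_i^{\mathrm{ideal}}$, minus the term $\sum_j (1-q_{ij})(\mathbf{w}_j-\mathbf{w}_i)/(1+|\mathcal{N}(i)|)$. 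This is \eqref{eq:unif_bias}.

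For the qualitative claims, note first that the bias term vanishes only if $\sum_j (1-q_{ij})(\mathbf{w}_j - \mathbf{w}_i) = 0$. A sufficient condition is that every $q_{ij}=1$ or every $\mathbf{w}_j=\mathbf{w}_i$. Under non-IID data the local models generically disagree, since local gradients pull toward different minimizers. With $q_{ij}<1$ the bias therefore does not vanish, and it cannot be removed by any choice of uniform weights. This is the sense of ``irreducible''.

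For the growth statement, I would bound the norm of the bias term. By the triangle inequality it is at most $\sum_j (1-q_{ij})\|\mathbf{w}_j-\mathbf{w}_i\|/(1+|\mathcal{N}(i)|)$. This bound is increasing in each loss rate $1-q_{ij}$ and in each disagreement $\|\mathbf{w}_i-\mathbf{w}_j\|$. In the single-neighbor case, or when all the $\mathbf{w}_j-\mathbf{w}_i$ are aligned, the norm equals this bound exactly.

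The main obstacle is making ``irreducible'' precise, because the bias could cancel across neighbors for special configurations. I would present the claim as holding generically, that is, outside a measure-zero set of model configurations. I would not argue for strict irreducibility in every instance. The algebraic identity itself is routine.
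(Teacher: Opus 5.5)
Your proposal is correct and follows the paper's proof. Both use linearity of expectation with Lemma~\ref{lem:lf_expect} applied coordinatewise, then regroup $q_{ij}\mathbf{w}_j+(1-q_{ij})\mathbf{w}_i=\mathbf{w}_j-(1-q_{ij})(\mathbf{w}_j-\mathbf{w}_i)$, which is what the paper does by adding and subtracting $\sum_j\mathbf{w}_j$ in the numerator.

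Your handling of the qualitative claims is more careful than the paper's. The paper says the bias vanishes only when every $q_{ij}=1$, but it can also vanish when $\mathbf{w}_j=\mathbf{w}_i$ or through cancellation across neighbors. Your generic non-vanishing argument and your triangle-inequality reading of ``grows with'' are the defensible versions.
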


\begin{IEEEproof}
Applying Lemma~\ref{lem:lf_expect} coordinate-wise and taking the expectation of the uniform aggregate:
\begin{align}
  \mathbb{E}\!\left[\mathbf{w}_i^{\mathrm{unif}}\right]
  &= \frac{\mathbb{E}[\mathbf{w}_i] + \sum_j \mathbb{E}
     [\hat{\mathbf{w}}_j]}{1+|\mathcal{N}(i)|} \notag\\
  &= \frac{\mathbf{w}_i + \sum_j 
     \left[q_{ij}{\mathbf{w}}_j + 
     (1-q_{ij})\mathbf{w}_i\right]}{1+|\mathcal{N}(i)|}.
  \label{eq:unif_expand}
\end{align}

\begin{align}
  &= \frac{\mathbf{w}_i + \sum_j {\mathbf{w}}_j + 
     \sum_j q_{ij}{\mathbf{w}}_j + \sum_j (1-q_{ij})\mathbf{w}_i - \sum_j {\mathbf{w}}_j}{1+|\mathcal{N}(i)|}  \notag\\
  &= \frac{\mathbf{w}_i + \sum_j {\mathbf{w}}_j}{1+|\mathcal{N}(i)|}
   - \frac{\sum_j(1-q_{ij}){\mathbf{w}}_j 
     - \sum_j(1-q_{ij})\mathbf{w}_i}{1+|\mathcal{N}(i)|} \notag\\
  &= \mathbf{w}_i^{\mathrm{ideal}}
   - \frac{\sum_j(1-q_{ij})
     ({\mathbf{w}}_j - \mathbf{w}_i)}{1+|\mathcal{N}(i)|}.
\end{align}
The bias vanishes only if $q_{ij} = 1$ for all $j$ (lossless links). Under partial reception with $q_{ij} < 1$, the expected aggregate under-weights each neighbor $j$. Under non-IID data, ${\mathbf{w}}_j \neq \mathbf{w}_i$ throughout training, so the bias is irreducible due to poor link quality.
\end{IEEEproof}

\begin{remark}[Bias as coefficient distortion]
\label{rem:bias_clarify}

The term $(\mathbf{w}_j-\mathbf{w}_i)$ in Proposition~\ref{prop:lf_bias} is the standard gossip update direction and is not itself the bias. The bias arises because local filling scales this direction by $q_{ij}$ rather than its ideal coefficient. As a result, low-quality links are systematically under-represented in the aggregate. DFL-AA corrects this distortion through IPW~\eqref{eq:unbiased_result}, making the numerator weight of each neighbor proportional to $\alpha_{ij}^*=\exp(-\mathrm{AoI}_{ij}/\tau)$, independent of $q_{ij}$.
\end{remark}

\subsection{Failure Mode 2: Push-Sum Weight Drain}
\label{sec:failure_mode2}

Push-sum~\cite{SGP} is the canonical method for gossip-based SGD on directed graphs and is the algorithmic family closest to DFL-AA's setting. However, we show that it is fundamentally incompatible with chunk-level packet loss, making it unsuitable for directed DFL under lossy communication.

\begin{proposition}[Push-sum weight drain]
\label{prop:drain}
In standard push-sum under Bernoulli$(q_{ij})$ chunk loss, where the weight variable $w_i$ is credited regardless of actual chunk reception, the total weight mass decays geometrically:
\begin{equation}
  \mathbb{E}\!\left[\sum_i w_i^{(t)}\right] 
  = \rho^t \cdot n, \quad \rho < 1.
\end{equation}
As $t \to \infty$, $w_i^{(t)} \to 0$ for all $i$, causing the model estimate $s_i/w_i$ to become numerically degenerate.
\end{proposition}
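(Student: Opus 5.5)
I would first make the push-sum model precise, since the statement leaves it partly implicit. Each node $i$ keeps a numerator $\mathbf{s}_i\in\mathbb{R}^d$ and a scalar weight $w_i$, initialized at $w_i^{(0)}=1$, so the total mass is $n$. At each step node $i$ splits its pair $(\mathbf{s}_i,w_i)$ into $1+|\mathcal{N}_{out}(i)|$ equal shares, one kept locally and one pushed to each out-neighbor. In the lossless case the mixing matrix $P$ is column-stochastic, so $\sum_i w_i^{(t)}=n$ is conserved.

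I read the drain mechanism as follows: the sender deducts its outgoing shares unconditionally, and the receiver credits a share only when the message survives. Under chunk loss, "survives" means that the scalar weight (carried in the header, or effectively all $K$ chunks needed for a consistent pair) arrives. That event is Bernoulli with some probability $r_{ij}\le q_{ij}<1$; the natural choice is $r_{ij}=q_{ij}^K$, or $q_{ij}$ if the weight rides in a single chunk. I will state this reading explicitly. Under the alternative reading, where the weight is "credited regardless", the weight mass would be conserved, and the degeneracy would instead have to come from the numerator $\mathbf{s}_i$ losing its lost-chunk mass. In that case the same argument applies coordinatewise to $\mathbf{s}_i$.

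The key step is a one-step conditional expectation. Let $\delta_{ij}^{(t)}$ be the reception indicator. Then
\[
w_i^{(t+1)}=\frac{w_i^{(t)}}{1+|\mathcal{N}_{out}(i)|}+\sum_{j\in\mathcal{N}(i)}\delta_{ij}^{(t)}\frac{w_j^{(t)}}{1+|\mathcal{N}_{out}(j)|}.
\]
Because the losses are independent of the current state, taking expectations gives $\mathbb{E}[\mathbf{w}^{(t+1)}]=\tilde P\,\mathbb{E}[\mathbf{w}^{(t)}]$, where $\tilde P$ is $P$ with each off-diagonal entry scaled by $r_{ij}$. Summing over $i$, the column sums of $\tilde P$ are
\[
\gamma_j=\frac{1+\sum_{i\in\mathcal{N}_{out}(j)}r_{ij}}{1+|\mathcal{N}_{out}(j)|}.
\]
Strong connectivity (Assumption~\ref{ass:conn}) guarantees every node has an out-neighbor, and since $r_{ij}<1$ this gives $\gamma_j<1$. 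Setting $\rho=\max_j\gamma_j$ yields $\mathbb{E}[\sum_i w_i^{(t+1)}]\le\rho\,\mathbb{E}[\sum_i w_i^{(t)}]$, and induction gives $\mathbb{E}[\sum_i w_i^{(t)}]\le\rho^t n$.

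The step I expect to be the main obstacle is the claimed equality $=\rho^t n$. Exact equality holds only if all the $\gamma_j$ coincide, for example with a homogeneous loss rate and a regular out-degree. I would prove that case exactly, and otherwise show it sandwiched between $(\min_j\gamma_j)^t n$ and $(\max_j\gamma_j)^t n$. Alternatively I would take $\rho$ to be the spectral radius of $\tilde P$. Since $\tilde P$ is nonnegative, irreducible by strong connectivity, and has column sums strictly below $1$, Perron--Frobenius gives $\rho(\tilde P)<1$ and $\mathbb{E}[\mathbf{w}^{(t)}]\sim c\,\rho(\tilde P)^t\mathbf{v}$.

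For the limit statement, note that every $w_i^{(t)}\ge0$ and that $\sum_t\mathbb{E}[\sum_i w_i^{(t)}]<\infty$. Markov's inequality and Borel--Cantelli then give $w_i^{(t)}\to0$ almost surely for every $i$. Meanwhile $\mathbf{s}_i$ shrinks at a comparable but different rate in each coordinate, so the ratio $\mathbf{s}_i/w_i$ becomes a $0/0$ form in floating point. That is the claimed numerical degeneracy.
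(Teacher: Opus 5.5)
Your proof is correct and uses the paper's basic argument, but it differs in how $\rho$ is obtained, and your version of that step is the valid one.

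The shared skeleton is this. Condition on round $t$ and write the expected total weight as $\sum_j w_j^{(t)}\,(1+\sum_{i\in\mathcal{N}_{out}(j)}q_{ij})/(1+|\mathcal{N}_{out}(j)|)$. This is your $\gamma_j$ with $r_{ij}=q_{ij}$, which is the crediting model the paper implicitly uses. Then bound this by a uniform $\rho<1$ and induct. Like you, the paper only proves $\mathbb{E}[\sum_i w_i^{(t)}]\le\rho^t n$, not the stated equality.

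The paper replaces $\sum_{i}q_{ij}$ by $|\mathcal{N}_{out}(j)|\,\bar q$, where $\bar q$ is the mean over \emph{all} edges, and takes $\rho=\max_j(1+|\mathcal{N}_{out}(j)|\bar q)/(1+|\mathcal{N}_{out}(j)|)$. That replacement is not an upper bound when per-link rates differ: any node whose out-links beat the global average violates it. The resulting rate can be far too optimistic. Take a long, nearly lossless cycle of out-degree-one nodes attached to a few very lossy, high-out-degree nodes that drag $\bar q$ toward $0$. The paper's $\rho$ is then close to $1/2$, while the true decay rate is close to $1$.

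Your direct choice $\rho=\max_j\gamma_j$ avoids this. Your min/max sandwich also correctly identifies when equality can hold: all $\gamma_j$ equal, e.g.\ homogeneous loss on an out-regular graph.

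The cost of your route is that $\max_j\gamma_j<1$ requires every node to have a lossy out-link, whereas the paper only posits $\bar q<1$. Under that weaker hypothesis, use your Perron--Frobenius variant. For irreducible $\tilde P$, a single column sum below $1$ already forces $\rho(\tilde P)<1$; to see this, apply $\mathbf{1}^\top\tilde P$ to the positive right Perron vector.

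Finally, your Borel--Cantelli step supplies the almost-sure limit $w_i^{(t)}\to0$. The paper asserts this limit directly from a bound that holds only in expectation.
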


\begin{IEEEproof}
Summing the expected total 
weight held by all nodes after one round:
\begin{align}
  \mathbb{E}\!\left[\sum_i w_i^{(t+1)}\right]
  &= \sum_j w_j^{(t)} \cdot 
     \frac{1 + \sum_{i:(j,i)\in\mathcal{E}} q_{ij}}
     {1 + |\mathcal{N}_{\mathrm{out}}(j)|}
     \notag\\
  &\leq \sum_j w_j^{(t)} \cdot 
     \frac{1 + |\mathcal{N}_{\mathrm{out}}(j)|\,\bar{q}}
     {1 + |\mathcal{N}_{\mathrm{out}}(j)|},
  \label{eq:drain}
\end{align}
where $\bar{q} = \frac{1}{|\mathcal{E}|}
\sum_{(j,i)\in\mathcal{E}} q_{ij} < 1$ is the 
mean reception rate across all directed edges. 
The inequality holds because 
$\sum_{i:(j,i)\in\mathcal{E}} q_{ij} \leq 
|\mathcal{N}_{\mathrm{out}}(j)|\,\bar{q}$ by 
definition of $\bar{q}$. Let:
\begin{equation}
  \rho = \max_j 
  \frac{1 + |\mathcal{N}_{\mathrm{out}}(j)|\,\bar{q}}
  {1 + |\mathcal{N}_{\mathrm{out}}(j)|} < 1,
\end{equation}
since $\bar{q} < 1$ implies the numerator is 
strictly less than the denominator, we obtain:
\begin{equation}
  \mathbb{E}\!\left[\sum_i w_i^{(t+1)}\right] 
  \leq \rho\,\mathbb{E}\!\left[\sum_i w_i^{(t)}\right],
\end{equation}
which gives $\mathbb{E}[\sum_i w_i^{(t)}] \leq \rho^t \cdot n$ by induction. As $t \to \infty$, $w_i^{(t)} \to 0$ for all $i$, causing the model estimate $s_i/w_i$ to become numerically degenerate. 
\end{IEEEproof}




\noindent DFL-AA avoids push-sum entirely. The self-normalized denominator $1 + \sum_j a_{ij}^{(t)}$ in Eq.~\eqref{eq:aggregate} is recomputed each round using only locally available information, without maintaining auxiliary weight variables. This enables operation on directed graphs under lossy links, where push-sum-based methods such as SGP~\cite{SGP} fail.

\subsection{Why IPW Corrects Selection Bias}
\label{sec:dflaa_correct}

\begin{theorem}[Unbiasedness of DFL-AA aggregation]
\label{thm:unbiased}
Suppose $\hat{q}_{ij} = q_{ij}$ exactly for all 
$j \in \mathcal{N}(i)$. Then the expected DFL-AA 
aggregate satisfies:
\begin{equation}
  \mathbb{E}\!\left[\mathbf{w}_i^{\mathrm{new}}\right]
  = \mathbf{w}_i + 
  \frac{\sum_j \alpha_{ij}^*\,(\mathbf{w}_j - \mathbf{w}_i)}
  {1 + \sum_j \alpha_{ij}^*/q_{ij}},
  \label{eq:unbiased_result}
\end{equation}
where $\alpha_{ij}^* = \exp(-\mathrm{AoI}_{ij}/\tau)$. The $(1-q_{ij})$ coefficient distortion of Proposition~\ref{prop:lf_bias} is exactly eliminated: each neighbor's update direction $(\mathbf{w}_j - \mathbf{w}_i)$ is weighted purely by AoI freshness $\alpha_{ij}^*$, with no residual dependence on link quality $q_{ij}$ in the numerator.
\end{theorem}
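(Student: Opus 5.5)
The plan is a direct expectation computation. The key simplification is that, under the hypothesis $\hat{q}_{ij}=q_{ij}$, every weight $a_{ij}$ is deterministic with respect to the chunk-loss randomness. So I will first fix the conditioning. I take the expectation over the reception masks $\mathbf{m}_{ij}$ only, conditional on the current states $\mathbf{w}_i,\mathbf{w}_j$ and on the timestamps, and hence on $\mathrm{AoI}_{ij}$. I also take the aggregation set to be the full in-neighborhood $\mathcal{N}(i)$, which matches the sum over $j$ in the statement and in Proposition~\ref{prop:lf_bias}; this means treating the $c_{\min}$ filter as inactive. With these conventions, $a_{ij}=\alpha_{ij}^*/q_{ij}$ does not depend on the masks. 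Consequently the denominator $D:=1+\sum_j \alpha_{ij}^*/q_{ij}$ of \eqref{eq:aggregate} is a constant, and linearity gives $\mathbb{E}[\mathbf{w}_i^{\mathrm{new}}]=D^{-1}\bigl(\mathbf{w}_i+\sum_j a_{ij}\,\mathbb{E}[\hat{\mathbf{w}}_j]\bigr)$.

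Next I apply Lemma~\ref{lem:lf_expect} coordinate-wise, substituting $\mathbb{E}[\hat{\mathbf{w}}_j]=q_{ij}\mathbf{w}_j+(1-q_{ij})\mathbf{w}_i$. The factor $1/q_{ij}$ in $a_{ij}$ cancels the $q_{ij}$ multiplying $\mathbf{w}_j$. The expected numerator therefore becomes $\mathbf{w}_i+\sum_j\alpha_{ij}^*\mathbf{w}_j+\sum_j\alpha_{ij}^*\tfrac{1-q_{ij}}{q_{ij}}\mathbf{w}_i$. This cancellation is exactly the Horvitz--Thompson mechanism.

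To reach the form \eqref{eq:unbiased_result}, I write $\mathbb{E}[\mathbf{w}_i^{\mathrm{new}}]=\mathbf{w}_i+D^{-1}\bigl(\mathbb{E}[\text{numerator}]-D\,\mathbf{w}_i\bigr)$. Expanding $D\,\mathbf{w}_i=\mathbf{w}_i+\sum_j(\alpha_{ij}^*/q_{ij})\mathbf{w}_i$, the terms in $\mathbf{w}_i$ combine with coefficient
\[
\frac{\alpha_{ij}^*(1-q_{ij})}{q_{ij}}-\frac{\alpha_{ij}^*}{q_{ij}}=-\alpha_{ij}^*.
\]
The bracket therefore collapses to $\sum_j\alpha_{ij}^*(\mathbf{w}_j-\mathbf{w}_i)$, which is the claimed identity. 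To conclude the remark in the statement, I compare with Proposition~\ref{prop:lf_bias}. There the effective coefficient on the direction $(\mathbf{w}_j-\mathbf{w}_i)$ is $q_{ij}$, whereas here it is $\alpha_{ij}^*$, with no $q_{ij}$ appearing in the numerator; $q_{ij}$ survives only in the scalar normalizer $D$.

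The main obstacle is justifying that the expectation passes through the ratio, since in the real algorithm the weights are not truly independent of the masks. Two things break this. First, membership in $\mathcal{B}_i^+$ depends on $c_{ij}$, and $c_{ij}$ is a function of the same $b_{ij}^{(k)}$. Second, the EWMA update of $\hat{q}_{ij}$ uses the current $c_{ij}$. I will handle both through the hypotheses. The exact equality $\hat{q}_{ij}=q_{ij}$ freezes the weights, and I will state explicitly that the theorem is conditional on $\mathcal{B}_i^+=\mathcal{N}(i)$ (for instance $c_{\min}=0$), with AoI determined by timestamps independent of chunk losses. Under these conditions, $D$ is measurable with respect to the conditioning and the computation above is rigorous. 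Without them, a correction term for the correlation between $D$ and the masks would appear, and I would flag that as outside the scope of the statement.
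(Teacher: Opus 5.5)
Your proposal is correct and follows the same route as the paper. You substitute $a_{ij}=\alpha_{ij}^*/q_{ij}$, apply Lemma~\ref{lem:lf_expect} so that the $1/q_{ij}$ cancels the $q_{ij}$ on $\mathbf{w}_j$, and absorb the $(1-q_{ij})/q_{ij}$ residual into the $\mathbf{w}_i$ self-term against the deterministic denominator $1+\sum_j\alpha_{ij}^*/q_{ij}$. Your explicit conditions are what the paper uses implicitly when it divides the expected numerator by the expected denominator: weights frozen by $\hat{q}_{ij}=q_{ij}$, AoI independent of the masks, and $\mathcal{B}_i^+=\mathcal{N}(i)$ so the $c_{\min}$ filter does not correlate with $c_{ij}$. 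Your version is therefore the more rigorous statement of the same argument.
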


\begin{remark}[$q_{ij}$ only affects normalization]
\label{rem:denom_qij}

Although $q_{ij}$ appears in the denominator $1+\sum_j \alpha_{ij}^*/q_{ij}$, it does not reintroduce the bias of Proposition~\ref{prop:lf_bias}. The bias originates from distorted coefficients on the update directions $(\mathbf{w}_j-\mathbf{w}_i)$. In Theorem~\ref{thm:unbiased}, these coefficients depend only on $\alpha_{ij}^*=\exp(-\mathrm{AoI}_{ij}/\tau)$ and are independent of $q_{ij}$. The denominator serves only as a normalization factor that adjusts the update magnitude.
\end{remark}

\begin{IEEEproof}
The DFL-AA aggregate~\eqref{eq:aggregate} with 
$\hat{q}_{ij} = q_{ij}$ is:
\begin{equation}
  \mathbf{w}_i^{\mathrm{new}} = 
  \frac{\mathbf{w}_i + \sum_j \frac{e^{-\mathrm{AoI}_{ij}/\tau}}
  {q_{ij}}\hat{\mathbf{w}}_j}
  {1 + \sum_j \frac{e^{-\mathrm{AoI}_{ij}/\tau}}{q_{ij}}}.
  \label{eq:dflaa_expand}
\end{equation}
We compute the expected numerator using Lemma~\ref{lem:lf_expect}:
\begin{align}
  &= \mathbb{E}\!\left[\mathbf{w}_i \right] + \sum_j \mathbb{E}\!\left[\frac{e^{-\mathrm{AoI}_{ij}/\tau}}
  {q_{ij}}\hat{\mathbf{w}}_j\right] \notag \\
  &= \mathbf{w}_i + \sum_j \frac{e^{-\mathrm{AoI}_{ij}/\tau}}{q_{ij}}
     \mathbb{E}\!\left[\hat{\mathbf{w}}_j\right] \notag\\
  &= \mathbf{w}_i + \sum_j \frac{e^{-\mathrm{AoI}_{ij}/\tau}}{q_{ij}}
     \left[q_{ij}\mathbf{w}_j + 
     (1-q_{ij})\mathbf{w}_i\right] \notag\\
  &= \mathbf{w}_i + \sum_j e^{-\mathrm{AoI}_{ij}/\tau}
   \mathbf{w}_j + \mathbf{w}_i \sum_j 
   \frac{(1-q_{ij})}{q_{ij}} e^{-\mathrm{AoI}_{ij}/\tau} \notag\\ 
  &= \mathbf{w}_i\!\left(1 + \sum_j 
     \frac{e^{-\mathrm{AoI}_{ij}/\tau}}{q_{ij}}\right)
   + \sum_j e^{-\mathrm{AoI}_{ij}/\tau}
     (\mathbf{w}_j - \mathbf{w}_i).
  \label{eq:ipw_term}
\end{align}

The expected denominator is:
\begin{align}
  \mathbb{E}\!\left[1 + \sum_j 
  \frac{e^{-\mathrm{AoI}_{ij}/\tau}}{q_{ij}}\right]
  &= 1 + \sum_j 
     \frac{e^{-\mathrm{AoI}_{ij}/\tau}}{q_{ij}}.
  \label{eq:denom_full}
\end{align}

\begin{equation}
  \mathbb{E}\!\left[\mathbf{w}_i^{\mathrm{new}}\right]
  = \mathbf{w}_i + 
  \frac{\sum_j e^{-\mathrm{AoI}_{ij}/\tau}
  (\mathbf{w}_j - \mathbf{w}_i)}
  {1 + \sum_j e^{-\mathrm{AoI}_{ij}/\tau}/q_{ij}},
\end{equation}
which simplifies to~\eqref{eq:unbiased_result} with $\alpha_{ij}^* = e^{-\mathrm{AoI}_{ij}/\tau}$ by collecting terms. Crucially, the $(1-q_{ij})/q_{ij}$ residual from the imputed values in~\eqref{eq:ipw_term} is absorbed into the $\mathbf{w}_i$ self-term and cancels exactly. This is the mechanism by which IPW eliminates selection bias.
\end{IEEEproof}

\begin{remark}[Finite EWMA residual]
\label{rem:ewma_residual}

When $\hat{q}_{ij} \neq q_{ij}$, the bias cancellation is approximate, leaving a residual of order $O(|\hat{q}_{ij} - q_{ij}| \cdot \|\mathbf{w}_j - \mathbf{w}_i\|)$. Both factors shrink during training as the EWMA converges geometrically to $q_{ij}$ and the models approach consensus, so the residual vanishes asymptotically.
\end{remark}

\subsection{Why Multiplicative Combination}
\label{sec:why_mul}

The IPW and AoI factors must be combined multiplicatively rather than additively. An additive combination $a_{ij} = 1/\hat{q}_{ij} + \exp(-\mathrm{AoI}_{ij}/\tau)$ would allow a very fresh but structurally under-represented neighbor to receive an inflated weight regardless of link quality, or vice versa. The multiplicative form enforces a \emph{both-must-be-good} criterion: a neighbor receives high weight only if it is both well-represented across rounds (low $1/\hat{q}_{ij}$ correction needed) and temporally fresh (low AoI).


\section{Performance Evaluation}
\label{sec:performance_eval}

\begin{table*}[t]
\centering
\caption{Performance across different data-loss levels in a 20-node asynchronous DFL setting. (Acc / Loss / AUC)}
\label{tab:main_acc_table}
\renewcommand{\arraystretch}{1.25}
\setlength{\tabcolsep}{5pt}

\begin{tabular}{llccc ccc ccc ccc ccc}
\toprule

& & 
\multicolumn{3}{c}{\textbf{FedAvg}} &
\multicolumn{3}{c}{\textbf{SWIFT}} &
\multicolumn{3}{c}{\textbf{AD-PSGD}} &
\multicolumn{3}{c}{\textbf{Soft-DSGD}} &
\multicolumn{3}{c}{\textbf{DFL-AA (Ours)}} \\
\cmidrule(lr){3-5} \cmidrule(lr){6-8} \cmidrule(lr){9-11} \cmidrule(lr){12-14} \cmidrule(lr){15-17}

\textbf{Dataset} & \textbf{Loss}
& Acc & Loss & AUC
& Acc & Loss & AUC
& Acc & Loss & AUC
& Acc & Loss & AUC
& Acc & Loss & AUC \\
\midrule

\multirow{5}{*}{EMNIST}

& 10\% 
& 29.74 & 7.96 & 27260
& 72.03 & 0.90 & 62289
& 70.20 & 0.95 & 58507
& 51.72 & 3.71 & 45203
& \textbf{74.86} & \textbf{0.81} & \textbf{66194} \\

& 20\% 
& 29.74 & 7.96 & 27260
& 71.17 & 0.93 & 61254
& 69.68 & 0.97 & 58199
& 51.44 & 3.72 & 44907
& \textbf{74.53} & \textbf{0.82} & \textbf{65632} \\

& 30\% 
& 29.74 & 7.96 & 27260
& 70.43 & 0.95 & 59994
& 69.14 & 0.99 & 57521
& 51.02 & 3.74 & 44363
& \textbf{74.14} & \textbf{0.84} & \textbf{64841} \\

& 50\% 
& 29.74 & 7.96 & 27260
& 67.28 & 1.06 & 56378
& 66.88 & 1.08 & 54784
& 50.37 & 3.76 & 43092
& \textbf{73.19} & \textbf{0.87} & \textbf{62868} \\

\midrule

\multirow{5}{*}{CIFAR-10}

& 10\%
& 26.93 & 5.74 & 104641
& 41.71 & 2.27 & 149772
& 38.94 & 2.60 & 144799
& 37.39 & 3.63 & 143724
& \textbf{50.47} & \textbf{1.62} & \textbf{185392} \\

& 20\%
& 26.24 & 6.00 & 104511
& 40.08 & 2.40 & 145664
& 37.99 & 2.65 & 144258
& 36.35 & 3.75 & 142892
& \textbf{50.35} & \textbf{1.59} & \textbf{183707} \\

& 30\%
& 26.44 & 5.97 & 104442
& 39.41 & 2.43 & 140693
& 39.03 & 2.46 & 142942
& 37.78 & 3.58 & 143168
& \textbf{50.24} & \textbf{1.58} & \textbf{182608} \\

& 50\%
& 26.71 & 5.94 & 104162
& 37.21 & 2.62 & 132485
& 39.54 & 2.46 & 136200
& 38.62 & 3.65 & 140288
& \textbf{49.39} & \textbf{1.56} & \textbf{178080} \\

\bottomrule
\end{tabular}
\end{table*}

\begin{figure*}[!t]
  \centering
  \includegraphics[width=\textwidth]{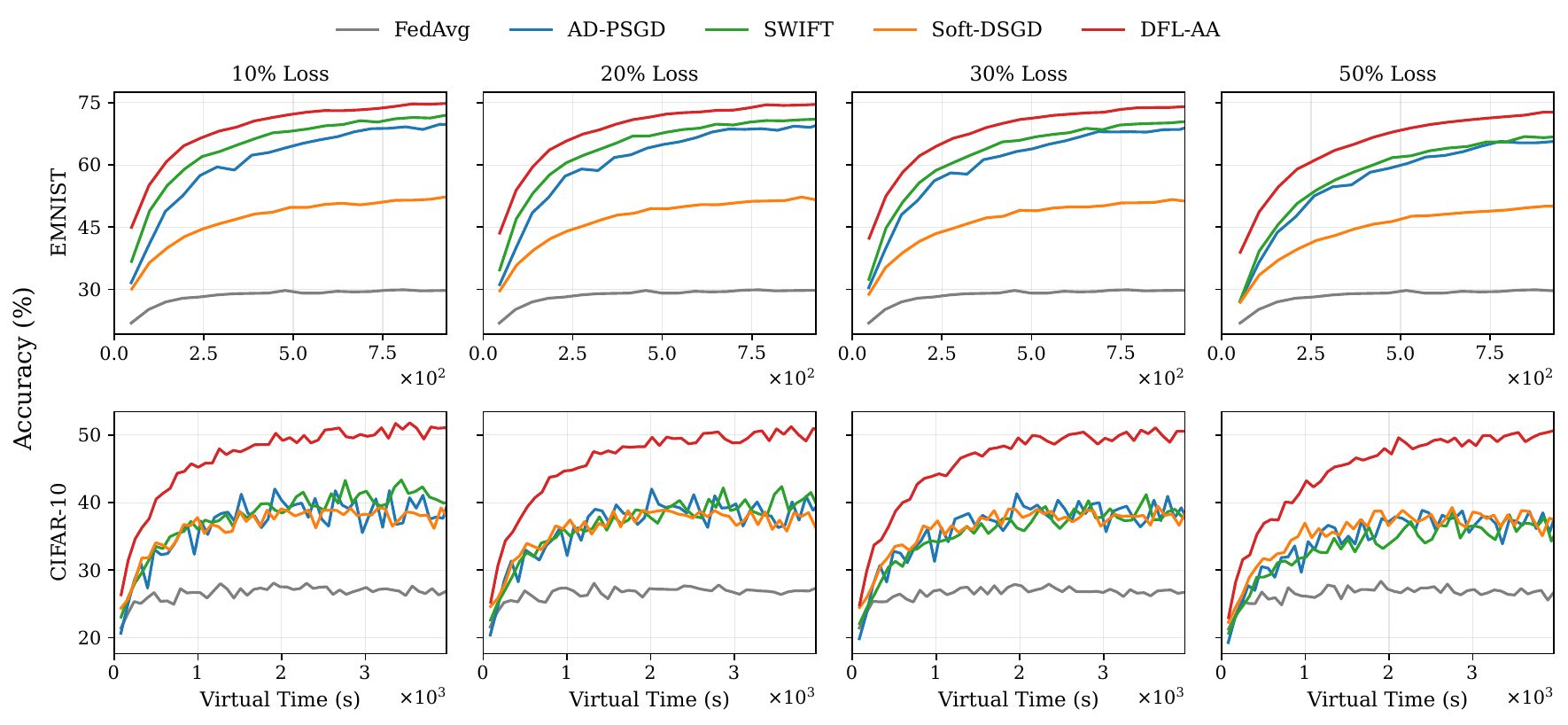}
  \caption{Test accuracy on EMNIST and CIFAR-10 datasets across lossy regimes (10--50\%) for 20-node setup with $\alpha=0.1$ heterogeneous data distribution for Erd\H{o}s-R'{e}nyi random graph with average degree 4.}
  \label{fig:accuracy_lossy}
\end{figure*}

\begin{figure*}[!t]
  \centering
  \includegraphics[width=\textwidth]{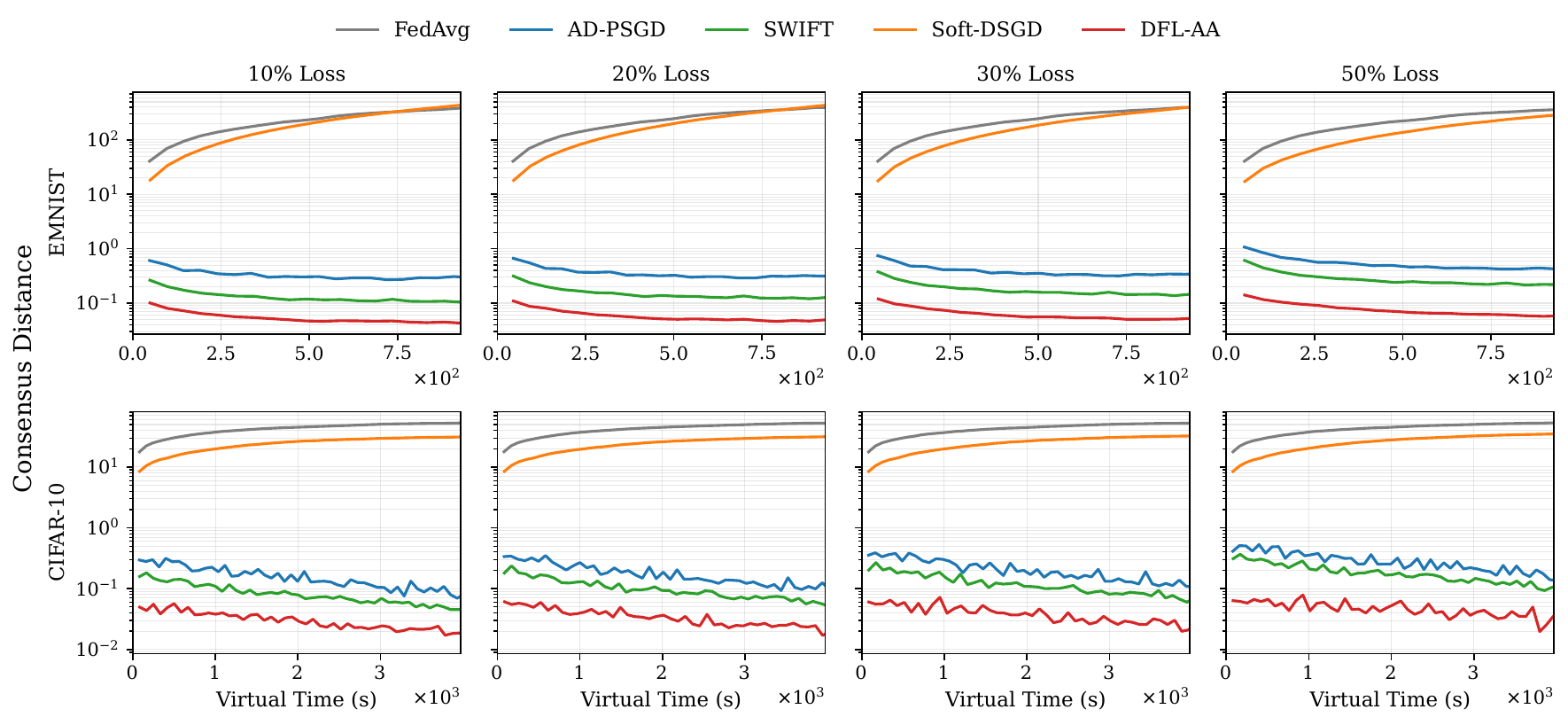}
  \caption{Consensus distance on EMNIST and CIFAR-10 datasets across lossy regimes (10--50\%) for 20-node setup with $\alpha=0.1$ heterogeneous data distribution for Erd\H{o}s-R'{e}nyi random graph with average degree 4.}
  \label{fig:consensus_lossy}
\end{figure*}

\begin{figure*}[!t]
  \centering
  \includegraphics[width=\textwidth]{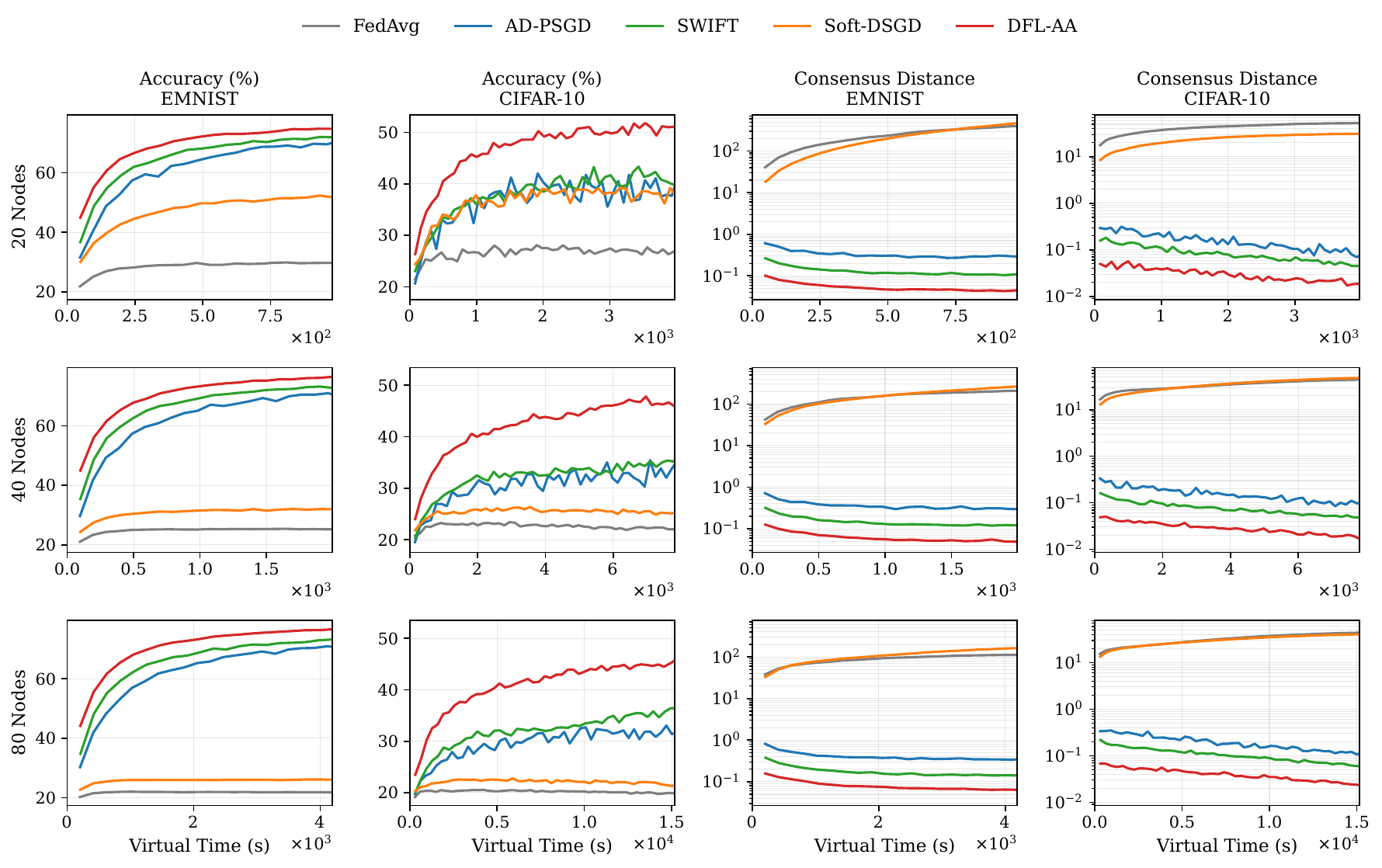}
  \caption{Scalability analysis of DFL-AA with each baseline over 10\% loss. Compare test accuracy and consensus distance on EMNIST and CIFAR-10 datasets with $\alpha=0.1$ heterogeneous data distribution for Erd\H{o}s-R'{e}nyi random graph with average degree 4.}
  \label{fig:node_scaling}
\end{figure*}

\begin{table*}[t]
\centering
\caption{Performance across different communication topologies in a 20-node asynchronous DFL setting under 10\% data loss. (Acc / Loss / AUC)}
\label{tab:topology_table}
\renewcommand{\arraystretch}{1.25}
\setlength{\tabcolsep}{5pt}

\begin{threeparttable}

\begin{tabular}{llccc ccc ccc ccc ccc}
\toprule

& &
\multicolumn{3}{c}{\textbf{FedAvg}} &
\multicolumn{3}{c}{\textbf{SWIFT}} &
\multicolumn{3}{c}{\textbf{AD-PSGD}} &
\multicolumn{3}{c}{\textbf{Soft-DSGD}} &
\multicolumn{3}{c}{\textbf{DFL-AA (Ours)}} \\
\cmidrule(lr){3-5} \cmidrule(lr){6-8} \cmidrule(lr){9-11} \cmidrule(lr){12-14} \cmidrule(lr){15-17}

\textbf{Dataset} & \textbf{Topology}
& Acc & Loss & AUC
& Acc & Loss & AUC
& Acc & Loss & AUC
& Acc & Loss & AUC
& Acc & Loss & AUC \\
\midrule

\multirow{3}{*}{EMNIST}
& (1)
& 29.74 & 7.96 & 27260
& 68.50 & 1.03 & \textbf{57866}
& \textbf{68.57} & \textbf{1.02} & 57682
& 29.79 & 7.98 & 27234
& 68.24 & 1.04 & 57863 \\

& (2)
& 29.74 & 7.96 & 27260
& 74.36 & 0.80 & 64725
& 72.64 & 0.86 & 60577
& 81.49 & 0.59 & 72811
& \textbf{81.59} & \textbf{0.58} & \textbf{73288} \\

& (3)
& 29.74 & 7.96 & 27260
& 72.03 & 0.90 & 62289
& 70.20 & 0.95 & 58507
& 51.72 & 3.71 & 45203
& \textbf{74.86} & \textbf{0.81} & \textbf{66194} \\

\midrule

\multirow{3}{*}{CIFAR-10}
& (1)
& 26.57 & 5.94 & 104257
& 41.40 & \textbf{2.27} & 144226
& 40.96 & \textbf{2.27} & 145076
& 27.08 & 5.63 & 104476
& \textbf{41.58} & 2.29 & \textbf{149041} \\

& (2)
& 26.71 & 5.99 & 104327
& 41.18 & 2.79 & 148995
& 37.68 & 2.31 & 142850
& 62.82 & 1.07 & 235581
& \textbf{63.59} & \textbf{1.04} & \textbf{234981} \\

& (3)
& 26.93 & 5.74 & 104641
& 41.71 & 2.27 & 149772
& 38.94 & 2.60 & 144799
& 37.39 & 3.63 & 143724
& \textbf{50.47} & \textbf{1.62} & \textbf{185392} \\

\bottomrule
\end{tabular}

\begin{tablenotes}[flushleft]
\footnotesize

\item
(1) Ring topology, where each node communicates with two neighbors in a circular structure.
(2) Fully connected topology, where each node communicates with all other nodes.
(3) Erd\H{o}s-R'{e}nyi random graph with average degree 4.

\end{tablenotes}
\end{threeparttable}
\end{table*}

\begin{table}[t]
\caption{Comparison of DFL-AA against EMNIST with different $\tau$ values under 10\% loss (mean AoI = 1.09, max AoI = 2.0)}
\label{tab:tau_ablation}
\centering
\scriptsize
\setlength{\tabcolsep}{3pt}
\renewcommand{\arraystretch}{1.2}
\begin{threeparttable}
\begin{tabular}{lcccc}
\toprule
\textbf{Method} &
\textbf{$\tau$ Value} &
\textbf{Accuracy(\%)} &
\textbf{Loss} &
\textbf{AUC} \\
\midrule
DFL-AA & 01 & 74.02 & 0.84 & 66229 \\
DFL-AA & 02 & 74.52 & 0.82 & 66265 \\
DFL-AA & 03 & 74.77 & 0.81 & 66252 \\
DFL-AA & 05 & 74.86 & 0.81 & 66194 \\
DFL-AA & 10 & 74.89 & 0.81 & 66090 \\
DFL-AA & 15 & 74.87 & 0.81 & 66104 \\
DFL-AA & 20 & 74.74 & 0.82 & 66114 \\
\bottomrule
\end{tabular}

\end{threeparttable}
\end{table}

\begin{table}[t]
\caption{Comparison of DFL-AA against EMNIST  with different staleness levels under 10\% loss and $\tau=5$}
\label{tab:staleness_table}
\centering
\scriptsize
\setlength{\tabcolsep}{3pt}
\renewcommand{\arraystretch}{1.2}
\begin{threeparttable}
\begin{tabular}{lccccc}
\toprule
\textbf{Mean AoI} &
\textbf{Max AoI} &
\textbf{Accuracy(\%)} &
\textbf{Loss} &
\textbf{Cons. Dist.} &
\textbf{AUC} \\
\midrule
1.09 & 2.0 & 74.86 & 0.81 & 0.05 & 66194 \\
1.38 & 3.5 & 73.79 & 0.84 & 0.05 & 63621 \\
1.96 & 5.0 & 73.05 & 0.87 & 0.06 & 62208 \\
\bottomrule
\end{tabular}

\end{threeparttable}
\end{table}

\begin{figure}[!t]
    \centering
    \includegraphics[width=\columnwidth]{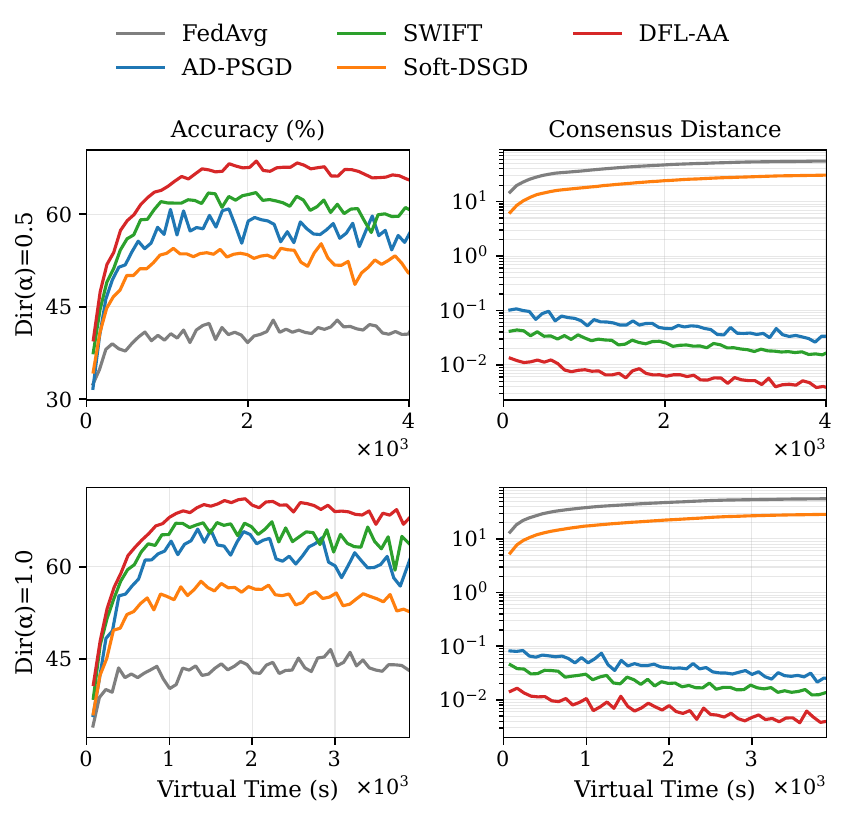}
    \caption{Evaluating test accuracy and consensus distance on the CIFAR-10 dataset with a 20-node setup across two heterogeneous data-distribution levels ($\alpha=0.5$ and $\alpha=1.0$) for Erd\H{o}s-R'{e}nyi random graph with average degree 4.}
    \label{fig:robustness_alphas}
\end{figure}

\begin{figure}[t]
    \centering
    \includegraphics[width=\columnwidth]{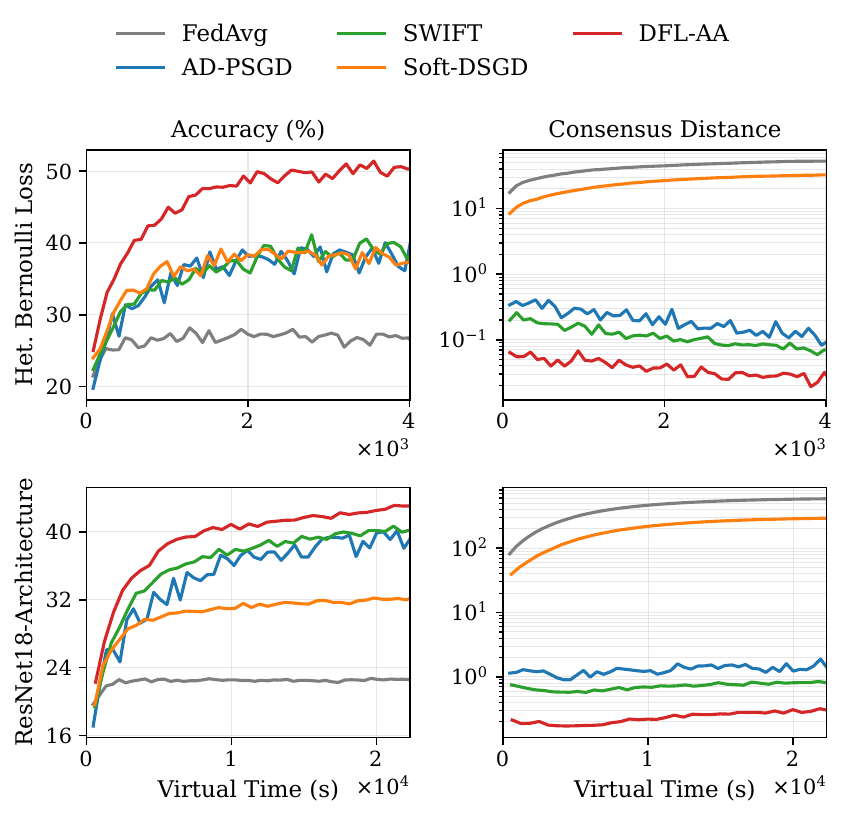}
    \caption{Heterogeneous Loss and Resnet}
    \label{fig:het_loss}
\end{figure}

\begin{figure}[!t]
    \centering
    \includegraphics[width=\columnwidth]{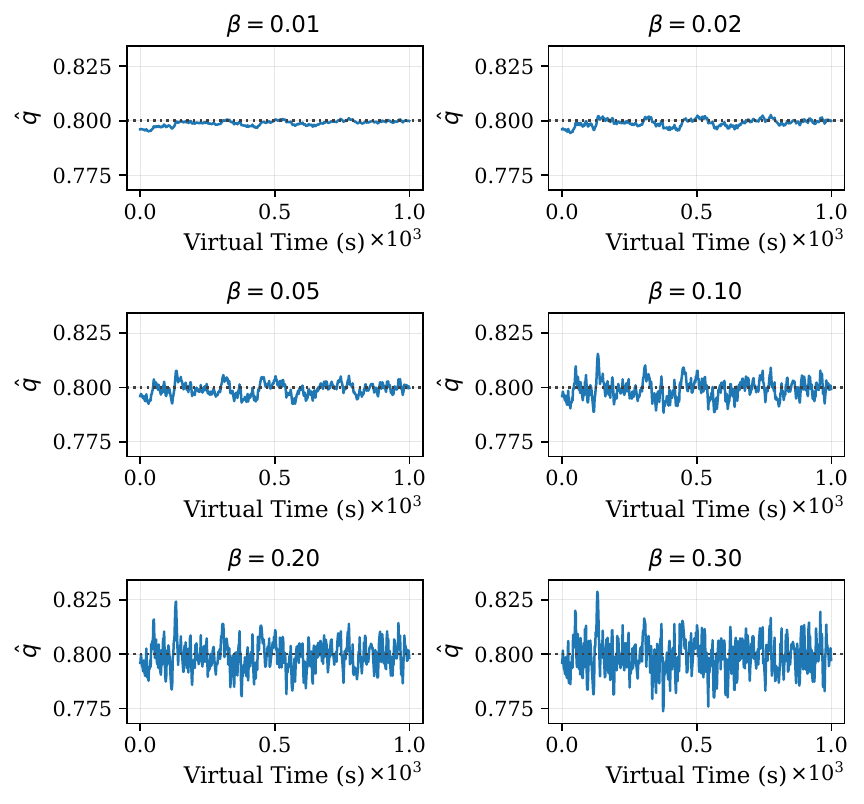}
    \caption{EWMA estimation of the true reception rate of $0.8$ (dashed) for six forgetting factors $\beta \in \{0.01, 0.02, 0.05, 0.10, 0.20, 0.30\}$ on a tracked link ($n=20$ nodes, EMNIST, $20\%$ loss).}
    \label{fig:ewma_converge}
\end{figure}

\paragraph{Simulator}
All experiments are conducted in a custom discrete-event simulator built on a priority-queue event engine. The system schedules local training completion, chunk transmission, inbox updates, and aggregation as timestamped events that are processed in causal order. We use virtual time as the x-axis for all training curves since the round number is not meaningful under asynchronous execution. Throughout experiments, we assume sufficient 
bandwidth and model packet loss as the primary impairment, consistent with prior work~\cite{DFL_with_unreliable_comm}. Bandwidth-aware chunk sizing is left for future work. All experiments are conducted on a virtual machine equipped with an L40s-16c GPU (24\,GB vRAM), 16 vCPUs, and 118\,GB RAM.

\paragraph{Datasets and Models}
We evaluate on two standard FL benchmarks using a non-IID Dirichlet ($ \alpha = 0.1$) partitioning scheme, which represents strong label heterogeneity for our main experiments.
\begin{itemize}
    \item \textbf{EMNIST}~\cite{EMNIST}:
    $28\times28$ grayscale handwritten characters with 47 classes. Model: two-layer MLP with hidden dimensions 256--256, LayerNorm after each layer, ReLU activations, and dropout~(0.1). Optimizer: Adam, learning rate $0.001$.

    \item \textbf{CIFAR-10}~\cite{CIFAR}: 
    $32\times32$ RGB images with 10 classes. Model: CNN with three stages of $3\times3$ standard convolutions (64--128--128 channels), BatchNorm, ReLU, a single residual skip at the widest stage, and a 256-unit FC head with dropout~(0.2). Optimizer: SGD with momentum $0.9$, learning rate $0.01$, weight decay $5\times10^{-4}$.
\end{itemize}

\paragraph{Training Configuration}

We set local epochs $E = 3$, batch size 64, EWMA factor $\beta = 0.05$, $c_{\min} = 0.1$, $q_{\mathrm{floor}} = 0.05$, and and packet size 1400~bytes following standard MTU constraints~\cite{MTU_ref}, which determines the number of chunks $K = \lceil d \cdot s_p / 1400 \rceil$ where $d$ is the model dimension and $s_p$ is the byte size per parameter, which is set as 4. The AoI decay constant $\tau$ is set to $5.0$, and the sensitivity analysis is provided in Table~\ref{tab:tau_ablation}.

\paragraph{Topology and Loss Levels}
We use random fixed directed Erd\H{o}s-R'{e}nyi graphs with average degree of 4. We evaluate on $n \in \{20, 40, 80\}$ nodes. Chunk loss is applied homogeneously across links with rates $\mathrm{loss} \in {10\%, 20\%, 30\%, 50\%}$. We also test the topology's robustness to ring and fully connected structures at a representative loss level and evaluate its robustness under heterogeneous per-link loss levels.

\paragraph{Baselines}
We compare against \textbf{FedAvg}~\cite{FedAvg} (decentralized variant which drops partial updates), \textbf{Soft-DSGD}~\cite{DFL_with_unreliable_comm} (local-fill reconstruction and gossip weighting), \textbf{AD-PSGD}~\cite{AD_PSGD} (asynchronous decentralized SGD assuming full delivery), and \textbf{SWIFT}~\cite{SWIFT} (wait-free asynchronous DFL under reliable delivery assumptions). All baselines are implemented in the simulator under identical topology, loss, and data partitioning settings. Each asynchronous method (AD-PSGD and SWIFT) is adapted with local-fill reconstruction for a fair comparison, since these methods were not originally designed to handle packet loss.

\subsection{Main Results}

Table~\ref{tab:main_acc_table} reports test accuracy, loss, and AUC for all methods with $n=20$ nodes across all loss levels under a heterogeneous data distribution among nodes (Dirichlet $\alpha=0.1$). DFL-AA achieves the highest accuracy in every configuration across both datasets. As loss increases, the advantage of DFL-AA grows monotonically, reaching \textbf{73.19\%} on EMNIST and \textbf{49.39\%} on CIFAR-10 at $50\%$ loss, outperforming the next-best method (SWIFT) by 5.91~pp and 12.18~pp, respectively. This monotonic widening directly validates Proposition~\ref{prop:lf_bias}, the $(1-q_{ij})$ coefficient distortion grows with loss rate, and DFL-AA's IPW correction becomes proportionally more valuable as the channel degrades. CIFAR-10 experiments in this setting achieve only the highest accuracy of 50.47\% due to high heterogeneity and limited information flow from the 4 average neighborhoods. Further experiments at low levels of data heterogeneity and with fully connected topologies demonstrate higher accuracy, as shown in Figure~\ref{fig:robustness_alphas} and Table~\ref{tab:topology_table}.

Among the baselines, AD-PSGD and SWIFT perform second-best despite assuming full delivery, because we augment them with local-fill reconstruction to ensure a fair comparison. Soft-DSGD, the method designed for partial reception, underperforms on both EMNIST and CIFAR-10 compared to other baselines, except FedAvg, thereby validating that local-fill reconstruction alone cannot be robust in asynchronous environments and that the directional distortion introduced by aggregation, as in Proposition~\ref{prop:lf_bias}. Strikingly, FedAvg, which simply drops partial updates, remains a lower bound because there is no complete model for the lossy-link setup we used in the experiments. Therefore, due to its architectural design, FedAvg discards all updates from neighbors and trains only the local model.

DFL-AA emerges as the best solution across different channel loss levels on every metric, including accuracy, loss, and AUC, thereby validating our unbiasedness Theorem~\ref{thm:unbiased} for DFL-AA aggregation over asynchronous lossy channels.

\subsection{Impact of Packet Loss Rate}

Figure~\ref{fig:accuracy_lossy} and ~\ref{fig:consensus_lossy} show accuracy and consensus distance (on a log scale) over virtual time at loss levels of 10--50\% for both datasets. Three observations are consistent across all configurations. 

First, DFL-AA converges faster and achieves higher accuracy than all baselines across all loss levels, and remains stable at higher loss rates. The convergence speed advantage is most pronounced under a high loss rate (at $50\%$ loss) on EMNIST; DFL-AA reaches 70\% accuracy, while all other baselines fail to reach that level at that loss level. FedAvg remains below 30\% throughout the runtime due to its local-only training as described in the main results section. Furthermore, the visible gap between all the baselines and DFL-AA widens on both datasets as channel loss increases $10\% \rightarrow 50\%$, again validating Proposition~\ref{prop:lf_bias}. The reduction in accuracy, along with an increase in channel loss, is minimal in DFL-AA compared to the best baseline. As loss increases from 10\% to 50\%, DFL-AA degrades by only 1.67~pp on EMNIST and 1.08~pp on CIFAR-10, while the next-best method SWIFT degrades by 4.75~pp and 4.5~pp, respectively. A relative degradation 2.84$\times$ and 4.17$\times$ larger than DFL-AA on EMNIST and CIFAR-10, respectively, confirming that IPW correction becomes proportionally more effective as channel quality worsens.

Second, Soft-DSGD exhibits a degradation pattern due to its inability to operate in asynchronous environments, despite being designed to handle partial model reception. This confirms that the local-fill reconstruction alone is insufficient in asynchronous wireless systems.

Third, Figure~\ref{fig:consensus_lossy} shows that DFL-AA achieves strictly lower consensus distance than all baselines across all loss levels and both datasets. This confirms that DFL-AA not only improves individual node accuracy but also drives the network toward tighter model agreement, a property that follows from eliminating the directional distortion in Proposition~\ref{prop:lf_bias}, which would otherwise push different nodes' aggregates in systematically different directions depending on their local link quality profiles.

\subsection{Scalability Analysis}

Figure~\ref{fig:node_scaling} shows accuracy and consensus distance for $n \in \{20, 40, 80\}$ 
nodes at fixed $10\%$ link loss. DFL-AA maintains its advantage across all node counts on both datasets. As $n$ increases, absolute accuracy decreases slightly across all methods due to sparser effective neighborhoods at a fixed average degree of 4 with fewer data samples, but DFL-AA's relative advantage is preserved. Consensus distance for DFL-AA remains consistently below all baselines across all scales, confirming that the IPW correction scales correctly with network size and that DFL-AA is robust to scalability.

Notably, FedAvg's accuracy collapses at 40 and 80 nodes on both datasets, because it limits its training on a local model as it drops partial receptions ($10\%$ loss), and due to increasing node counts, local data belonging to each node is limited and local datasets become more class-imbalanced and smaller, making local-only training increasingly ineffective. There is another notable performance degradation for Soft-DSGD as the node count increases, whereas AD-PSGD, SWIFT, and DFL-AA remain stable. This might be due to extreme heterogeneity and limited local data at each node, as well as directional bias introduced by local filling and averaging, which Soft-DSGD is designed to mitigate.

\subsection{Topology Robustness}

Table~\ref{tab:topology_table} reports performance across three topologies at $10\%$ link loss for ring, fully connected, and Erd\H{o}s-R'{e}nyi random directed graph with average degree~4. 

DFL-AA maintains competitive performance across all topologies. On the fully connected graph, Soft-DSGD achieves the second-highest accuracy on both datasets, only being defeated by DFL-AA. The gap is tiny, and it validates the Soft-DSGD design, as the authors demonstrate its robustness on a fully connected topology.

In the ring topology, all methods suffer significant accuracy degradation due to poor network connectivity, where each node has only two neighbors, thereby limiting information flow. DFL-AA maintains parity with SWIFT and AD-PSGD on this topology, confirming that the IPW correction is topology-agnostic, operating on per-link reception statistics available locally and not depending on global topology knowledge. Furthermore, our direct competitor, Soft-DSGD, achieves competitive performance on the fully connected topology, which its authors tested; we demonstrate its degradation under low information flow (random neighbors and ring topologies), while DFL-AA maintains stability across topologies, demonstrating its robustness.

\subsection{Heterogeneity Level Analysis}

Figure~\ref{fig:robustness_alphas} illustrates the performance of each method on the CIFAR-10 dataset under $10\%$ link loss at two levels of heterogeneity ($\alpha=0.5$ and $\alpha=1.0$). As $\alpha$ increases, the heterogeneity of the data distribution decreases, helping each node achieve higher accuracy than at lower $\alpha$ values in the same configuration. Figure~\ref{fig:robustness_alphas}
 clearly shows that the gap between DFL-AA and other baselines decreases with higher $\alpha$ values (lower data heterogeneity), but DFL-AA still manages to outperform other baselines by a clear margin, validating IPW-correctness under lossy channels along with staleness decay. However, the consensus distance between DFL-AA and the other best baselines (AD-PSGD and SWIFT) is not clearly visible at low levels of heterogeneity. These experimental results emphasize the importance of DFL-AA in wireless lossy network systems, where participating nodes mostly have heterogeneous data and operate asynchronously, and DFL-AA performs extremely well.

To validate DFL-AA beyond Assumption~\ref{ass:homo_loss}, 1st row of Figure~\ref{fig:het_loss} evaluates performance under heterogeneous per-link channel conditions on CIFAR-10 with $n=20$ nodes, where each link independently draws its loss rate from $\mathrm{Uniform}(0.05, 0.50)$. DFL-AA achieves 49.7\% final accuracy, only 0.77~pp below its best homogeneous result at $10\%$ loss (Table~\ref{tab:main_acc_table}), while outperforming the next-best baseline (SWIFT, 40.6\%) by 9.1~pp. This confirms that the online EWMA channel estimation adapts effectively to heterogeneous link quality: each node independently estimates its per-link $\hat{q}_{ij}$ and adjusts IPW weights accordingly, without knowledge of the global loss distribution.

To assess architectural generality, we repeat the CIFAR-10 experiment ($10\%$ loss, $n=20$, degree-4 random directed Erd\H{o}s-R'{e}nyi graph) with ResNet-18~\cite{resnet} (~11M parameters) in place of the CNN (~592K parameters). 2nd row of Figure~\ref{fig:het_loss} demonstrate the results. DFL-AA maintains its advantage over all baselines across both architectures, achieving 43.9\% with ResNet-18 and 50.47\% with the CNN, versus the next-best baseline (SWIFT) at 40.4\% and 41.71\%, respectively. The lower absolute accuracy of ResNet-18 is expected under this regime because its larger parameter count is serialized into proportionally more packets, increasing per-round partial reception under fixed loss, while its higher capacity is harder to train under extreme non-IID ($\alpha=0.1$) with limited per-node data. Crucially, the relative ranking of methods is preserved, confirming that the aggregation contribution is architecture-agnostic, with IPW and AoI corrections operating on flattened parameter vectors and being independent of model depth or parameter count.

\subsection{Hyperparameter Sensitivity}

Table~\ref{tab:tau_ablation} reports DFL-AA accuracy on EMNIST under 10\% loss for seven values of the AoI decay constant $\tau \in \{1, 2, 3, 5, 10, 15, 20\}$. Accuracy is stable across the range $\tau \in [3, 15]$, with the peak at $\tau=10$ (74.89\%). A very small $\tau=1$ slightly overpenalizes stale updates, reducing accuracy by 0.87~pp relative to the peak. A very large $\tau$ also degrades accuracy as the AoI weight approaches uniformity. The mean AoI in this experiment is 1.09~s with a maximum of 2.0~s, and the stable range is $\tau \in [3, 15]$, suggesting a practical tuning rule that sets $\tau$ to 3--10$\times$ the expected mean AoI in the deployment.

Table~\ref{tab:staleness_table} shows DFL-AA accuracy across three staleness levels (mean AoI: 1.09, 1.38, 1.96~s) with fixed $\tau=5$. Accuracy degrades gracefully as staleness increases. A 0.87~s increase in mean AoI reduces accuracy by only 1.81~pp, confirming that the AoI decay term effectively discounts stale updates and maintains robust performance under increased asynchrony.

Figure~\ref{fig:ewma_converge} shows $\hat{q}_{ij}$ over virtual time for a tracked link with a true reception rate of $q_{ij}=0.8$, using a shared y-axis to make the asymptotic variance directly comparable across panels. All six values of $\beta$ yield estimates centered on the true rate throughout training, confirming that the online channel-estimation assumption underlying Theorem~\ref{thm:unbiased} holds in practice. The figure reveals a clear bias-variance trade-off, where $\beta=0.01$ yields an almost flat estimate with minimal variance, whereas $\beta=0.30$ converges at the same rate but exhibits roughly $5\times$ wider oscillations around the true value. The default $\beta=0.05$ maintains a tight, stable estimate. The residual bias $O(|\hat{q}_{ij}-q_{ij}| \cdot \|\mathbf{w}_j-\mathbf{w}_i\|)$ from Remark~\ref{rem:ewma_residual} is therefore negligible throughout training and EWMA tracking experiments confirm that $\hat{q}_{ij}$ converges to the true reception rate across all tested forgetting factors, validating the online estimation assumption underlying Theorem~\ref{thm:unbiased} in practice.

\section{Conclusions and Future Work}
\label{sec: conclusion}

We presented DFL-AA, an asynchronous gossip aggregation algorithm for decentralized federated learning over fixed directed graphs, with chunk-level Bernoulli packet loss over fixed directed wireless links. DFL-AA addresses two key failure modes not handled by classical gossip methods: selection bias, where low-quality links are under-represented in the inbox, and update staleness, where asynchronous nodes contribute models with varying temporal freshness. The IPW correction with online EWMA channel estimation removes the $(1-q_{ij})$ coefficient distortion in expectation, while the AoI decay discounts stale updates without requiring a global clock. Experiments on EMNIST and CIFAR-10 over 20-, 40-, and 80-node directed topologies show consistent gains over all baselines across loss rates from $10\%$ to $50\%$, with larger improvements under higher loss and stronger heterogeneity. Additional experiments under heterogeneous per-link channel conditions, varying data heterogeneity levels, and ResNet-18 confirm that DFL-AA generalizes beyond the homogeneous evaluation setting and remains effective regardless of channel distribution, data 
heterogeneity, or model architecture. Finally, EWMA tracking experiments confirm that $\hat{q}_{ij}$ converges to the true reception rate under reasonable forgetting factors.

The proposed DFL-AA framework also opens several promising directions for future research. One direction is to extend the IPW mechanism to temporally correlated losses in communication links, enabling robust operation under burst losses. Another direction is the incorporation of bandwidth-aware communication models, where chunk sizing and IPW design can be jointly optimized under practical network constraints. Future work may also investigate parameter-level aggregation strategies that operate directly on partial model updates, potentially improving communication efficiency and further reducing estimation bias. Finally, extending DFL-AA to Byzantine-resilient and heterogeneous-model decentralized learning scenarios represents an important research direction.

\bibliographystyle{IEEEtran}
\bibliography{references}

\end{document}